\pgfplotsset{compat=1.18}
\acrodef{RL}[RL]{reinforcement learning}
\acrodef{PPO}[PPO]{proximal policy optimization}
\title{Excluding the Irrelevant: Focusing Reinforcement Learning through Continuous Action Masking}
\author{%
  Roland Stolz$^{1,}$\thanks{The first two authors contributed equally to this work.} \\
  \And
   Hanna Krasowski$^{1,2,*}$ \\
  \AND
  Jakob Thumm$^{1}$\\
  \And
  Michael Eichelbeck$^{1}$  \\
  \And
  Philipp Gassert$^{1,3}$\\
  \And
  Matthias Althoff$^{1}$\\
  \And
  \vspace{-0.5cm}\\
  $^{1}$Technical University of Munich, $^{2}$University of California, Berkeley, \\$^{3}$Munich Center for Machine Learning\\
  \texttt{\{roland.stolz, hanna.krasowski\}@tum.de}\\
}
\newcommand{\rayapproach}{ray mask}
\newcommand{\distapproach}{distributional mask}
\newcommand{\genapproach}{generator mask}
\newcommand{\fig}{Fig. }
\newtheorem{proposition}{Proposition}
\newtheorem{lemma}{Lemma}
\newtheorem{assumption}{Assumption}
\begin{document}

\maketitle

\begin{abstract}
  Continuous action spaces in reinforcement learning (RL) are commonly defined as multidimensional intervals.
  While intervals usually reflect the action boundaries for tasks well, they can be challenging for learning because the typically large global action space leads to frequent exploration of irrelevant actions. Yet, little task knowledge can be sufficient to identify significantly smaller state-specific sets of relevant actions. Focusing learning on these relevant actions can significantly improve training efficiency and effectiveness. 
  In this paper, we propose to focus learning on the set of relevant actions and introduce three continuous action masking methods for exactly mapping the action space to the state-dependent set of relevant actions. 
  Thus, our methods ensure that only relevant actions are executed, enhancing the predictability of the RL agent and enabling its use in safety-critical applications.
  We further derive the implications of the proposed methods on the policy gradient. 
  Using \ac{PPO}, we evaluate our methods on four control tasks, where the relevant action set is computed based on the system dynamics and a relevant state set. Our experiments show that the three action masking methods achieve higher final rewards and converge faster than the baseline without action masking. 
\end{abstract}

\acresetall
\section{Introduction}
\Ac{RL} can solve complex tasks in areas such as robotics \cite{Han2023-roboticRLsurvey}, games \cite{shao2019survey}, and large language models \cite{ouyang2022training}.
Yet, training \ac{RL} agents is often sample-inefficient due to frequent exploration of actions, which are irrelevant to learning a good policy. Irrelevant actions are actions that are either physically impossible, forbidden due to some formal specification, or evidently counterproductive for solving the task.  
Since the global action space is typically large in relation to the relevant actions in each state, exploring these actions frequently can introduce unnecessary costs, lead to slow convergence, or even prevent the agent from learning a suitable policy.

Action masking mitigates this problem by constraining the exploration to the set of relevant state-specific actions, which can be obtained based on task knowledge. 
For example, when there is no opponent within reach in video games, attack actions are masked from the action space \cite{Ye2020-MOBAmasking}.  
Leveraging task knowledge through action masking usually leads to faster convergence and also improves the predictability of the \ac{RL} agent, especially when the set of relevant actions has a specific notion, such as being the set of safe actions. For instance, if the set of relevant actions is a set of verified safe actions, action masking can be used to provide safety guarantees \cite{Fulton2018, Krasowski2023b.ProvablySafeRLSurvey}.

When the relevant action set is easy to compute, action masking usually benefits \ac{RL} by improving the sample efficiency and is the quasi-standard for discrete action spaces \cite{ Shakya2023,Zhong_Yang_Zhao_2024}, e.g., in motion planning \cite{Feng2023, Krasowski2020, Mirchevska2018, Rudolf2022, Xiaofei2022-masking}, games \cite{Hou2023-maskinggames,Huang2022, Ye2020-MOBAmasking}, and power systems \cite{Lingyu2023, Tabas2022}. However, real-world systems operate in continuous space and discretizing it might prevent learning optimal policies. Furthermore, simulation of real-world systems is computationally expensive, and developing \ac{RL} agents for them often requires additional real-world training \cite{zhao_sim-to-real_2020}. Thus, sample efficiency is particularly valuable for these applications. 

In this work, we propose three action masking methods for continuous action spaces. They can employ convex set representations, e.g., polytopes or zonotopes, for the relevant action set. Our action masking methods generalize previous work in \cite{Krasowski2023b.ProvablySafeRLSurvey}, which is constrained to intervals as relevant action sets. This extends the applicability of continuous action masking to expressive convex relevant action sets, which is especially useful when action dimensions are coupled, e.g., for a thrust-controlled quadrotor. To integrate the relevant action set into \ac{RL}, we introduce three methods: the \emph{\genapproach{}}, which exploits the generator representation of a zonotope, the \emph{\rayapproach{}}, which projects an action into the relevant action set based on radial directions, and the \emph{\distapproach{}}, which truncates the policy distribution to the relevant action set. In summary, our main contributions are:

\begin{itemize}
	\item We introduce continuous action masking based on convex sets representing the state-dependent relevant action sets;
	\item We present three methods to utilize the relevant action sets and derive their integration in the backward pass of \ac{RL} with stochastic policies;
	\item We evaluate our approach on four benchmark environments that demonstrate the applicability of our continuous action masking approaches.
\end{itemize}

\section{Related literature}

Action masking has been mainly applied to discrete action spaces \cite{ Feng2023, Fulton2018, Hou2023-maskinggames, Huang2022, Krasowski2023b.ProvablySafeRLSurvey,Krasowski2020,Lingyu2023, Mirchevska2018,Rudolf2022,  Tabas2022, varricchione2024a-safemaskingLTL, Xiaofei2022-masking,  Ye2020-MOBAmasking,  Zhong_Yang_Zhao_2024}. Huang et al. \cite{Huang2022} derive implications on policy gradient \ac{RL} and evaluate their theoretical findings on real-time strategy games. They show that masking actions leads to higher training efficiency and scales better with an increasing number of actions than penalizing the agent for selecting irrelevant actions. Huo et al. \cite{Hou2023-maskinggames} have extended \cite{Huang2022} to off-policy \ac{RL} and have observed similar empirical results. 

Action masking for discrete action spaces can be categorized by the purpose of the state-dependent relevant action set. Often, the set is obtained by removing irrelevant actions based on task knowledge \cite{Feng2023,Hou2023-maskinggames, Huang2022, Lingyu2023, Rudolf2022, Xiaofei2022-masking}, e.g., executing a harvesting action before goods are produced \cite{Huang2022}. While the relevant action sets are usually manually engineered, a recent study \cite{Zhong_Yang_Zhao_2024} takes a data-driven approach and identifies redundant actions based on similarity metrics. Another common interpretation of the relevant action set is that it only includes safe actions \cite{Fulton2018, Krasowski2020, Mirchevska2018,  Tabas2022,  varricchione2024a-safemaskingLTL}. These works typically use the system dynamics to verify the safety of actions. A safe action avoids defined unsafe areas or complies with logic formulas. In our experiments, the relevant action sets are either state-dependent safe action sets or a global relevant action set modeling power supply constraints.

For continuous action spaces, there is work on utilizing action masking with multidimensional intervals (hereafter only referred to as intervals) as relevant action sets \cite{Krasowski2023b.ProvablySafeRLSurvey}. 
In particular, the proposed continuous action masking represents relevant action sets by intervals that reflect safety constraints and employs straightforward re-normalization to map the action space to the relevant action set. In this paper, we generalize to more expressive relevant action sets and demonstrate the applicability to four benchmark environments. 

\section{Preliminaries}

As the basis for our derivations of the three masking methods, we provide a concise overview of \ac{RL} with policy gradients. Further, we define the considered system dynamics as well as the set representations used in this work.

\subsection{Reinforcement learning with policy gradients}

A Markov decision process is a tuple $\left( \mathcal{S}, \mathcal{A}, T, r, \gamma \right)$ consisting of the following elements: the observable and continuous state set $\mathcal{S} \subset \mathbb{R}^{n^S}$, the action set $\mathcal{A} \subset \mathbb{R}^{N}$, the state-transition distribution $T(s'| a, s)$, which is stationary and describes the transition probability to the next state $s' \in \mathcal{S} \subset \mathbb{R}^{n^{\mathcal{S}}}$ from the current state $s \in \mathcal{S}$ when executing action $a \in \mathcal{A}$, the reward $r: \mathcal{S} \times \mathcal{A} \rightarrow \mathbb{R}$, and the discount factor $\gamma$ for future rewards \cite{sutton_reinforcement_2018}. The goal of \ac{RL} is to learn a parameterized policy $\pi_\theta(a | s)$ that maximizes the expected reward $\max_\theta \mathbb{E}_{\pi_\theta} \sum_{t=0}^{\infty} \gamma^t r(s_t, a_t)$.

For policy gradient algorithms, learning the optimal policy $\pi_\theta^*(a | s)$ is achieved by updating its parameters $\theta$ using the policy gradient \cite[Thm. 2]{sutton_policy_1999}
\begin{equation}\label{eq:policy_gradient}
    \nabla J(\pi_\theta) = \mathbb{E}_{\pi_\theta} \left[ \nabla_\theta \log \pi_\theta(a | s) 
    A_{\pi_\theta} (a, s) \right],
\end{equation}
where $A_{\pi_\theta}(a, s)$ is the advantage function, which represents the expected improvement in reward by taking action $a$ in state $s$ compared to the average action taken in that state according to the policy $\pi_\theta(a | s)$. An estimation of the advantage $A_{\pi_\theta}(a, s)$ is usually provided by a neural network.

\subsection{System model and set representations}\label{sec:prelim_model_sets}
We consider general continuous-time systems of the form
\begin{equation}\label{system_equation}
    \dot{s} = f(s, a, w),
\end{equation}
where $w \in \mathcal{W} \subset \mathbb{R}^{n^{\mathcal{W}}}$ denotes a disturbance. The input is piece-wise constant with a sampling interval $\Delta t$. 
We assume $\mathcal{S}$, $\mathcal{A}$, and $\mathcal{W}$ to be convex. 

Zonotopes are a convex set representation well-suited for representing the relevant action set, due to the efficiency of computing their Minkowski sums and linear maps. A zonotope $\mathcal{Z} \subset \mathbb{R}^{N}$ with center $c \in \mathbb{R}^{N}$, generator matrix $G \in \mathbb{R}^{N \times P}$, and scaling factors $\beta \in \mathbb{R}^{P}$ is defined as
\begin{equation}
\mathcal{Z} = \left\{ c + G \beta \, \big| \, \| \beta \|_{\infty} \leq 1 \right\} = \langle c, G \rangle_{\mathcal{Z}}.
\end{equation}
Additionally, let us denote that $G_{(\cdot, i)}$ returns the $i$-th column vector of the generator matrix $G$. The Minkowski addition $\mathcal{Z}_1 \oplus \mathcal{Z}_2 $ of two zonotopes $\mathcal{Z}_1$, $\mathcal{Z}_2$ and the linear map $M \mathcal{Z}_1$ of a zonotope $\mathcal{Z}_1$ are given by \cite[Eq. 2.1]{althoff2010dissertation} 
\begin{subequations}\label{zono_operations}
\begin{align}
    \mathcal{Z}_1 \oplus \mathcal{Z}_2 & = \langle c_1 + c_2, \begin{bmatrix} G_1 & G_2 \end{bmatrix} \rangle_{\mathcal{Z}}, \\
    M \mathcal{Z}_1 & = \langle M c_1, M G_1 \rangle_{\mathcal{Z}}.
\end{align}
\end{subequations}

\section{Continuous action masking}\label{sec:methods}

To apply action masking, a relevant action set $\mathcal{A}^r(s) \subseteq \mathcal{A}$ has to be available, which constrains the action space $\mathcal{A}$ based on task knowledge.
Let us denote the state-dependent relevant action set as $\mathcal{A}^r(s) \subseteq \mathcal{A}$. From now on, we omit the dependency on the state to simplify notation. For our continuous action masking methods, we specifically require the following two assumptions: 
\begin{assumption}\label{ass:relevant_action_set}
    The relevant action set $\mathcal{A}^r$ is convex and its center and boundary points are computable.
\end{assumption}
\begin{assumption}\label{ass:stochastic_policy}
    The policy $\pi_\theta: S \times \mathcal{A} \rightarrow \mathbb{R}_+$ of the agent is represented by a parameterized probability distribution $a \sim \pi_\theta(a | s)$.
\end{assumption}
Common convex set representations that fulfill Assumption~\ref{ass:relevant_action_set} are polytopes or zonotopes. 
Our continuous action masking methods transform the policy $\pi_\theta(a | s)$, for which the parameters $\theta$ usually specify a neural network, into the relevant policy $\pi_\theta^r: S \times \mathcal{A}^r \rightarrow \mathbb{R}_+$ through a functional $h: (\Pi \times \mathcal{P}(\mathcal{A})) \rightarrow \Pi^r$. Here,  $\Pi$ is the space of all policies, $\Pi^r$ is the space of all relevant policies, and $\mathcal{P}(\mathcal{A})$ is the power set of all $\mathcal{A}^r$. The transformation is defined as
\begin{equation} \label{eq:actionmasking}
    a^r \sim \pi_\theta^r(a^r | s)
    = h \big( \pi_\theta(a | s), \mathcal{A}^r \big),
\end{equation}
and thereby ensures that $a^r \in \mathcal{A}^r$ always holds. Please note that policy gradient methods only require the ability to sample from and to compute the gradient of the policy distribution. Therefore, explicit closed-form expressions for $\pi_\theta^r(a^r | s)$ and $h$ are not necessary.

In the following subsections, we introduce and evaluate three masking methods: \genapproach, \rayapproach, and \distapproach, as shown in \fig \ref{fig:overview}. 
We derive the effect of each masking approach on the gradient of the objective function for stochastic policy gradient methods.

\def\hexagon#1#2{
\begin{scope}[shift={#1}]
    \draw[very thick] (-1.5, 0) -- (-0.75, 1) -- (0.75, 1) -- (1.5, 0) -- (0.75, -1) -- (-0.75, -1) -- cycle;
    \draw (-0.6,0.7) node {#2};
\end{scope}
}

\def\bbox#1#2{
\begin{scope}[shift={#1}]
    \draw[thick] (-2.2,-1.5) rectangle (2.2,1.5) (2,1);
    \draw (-1.9,1.2) node {#2};
\end{scope}
}

\colorlet{masked_c}{black!40!green}

\definecolorset{RGB}{PLOT}{}{%
	Brown, 153, 79, 0;%
	Black,   0,   0,   0;%
	Blue, 86, 180, 233;%
	DarkBlue, 0, 101, 189;%
	Orange, 230, 159, 0;%
	DarkOrange, 213, 94, 0;%
	Purple, 93, 58, 155;%
	Pink, 220, 38, 127;%
	Red, 227, 27, 35;%
	Yellow, 255, 195, 37;%
	BluishGreen, 0, 158, 115;%
	ReddishPurple, 204, 121, 167;%
	Olive, 162, 173, 0;%
	Gray, 153, 153, 153;%
	Cyan, 64, 176, 166;%
	DarkPurple, 136, 34, 85
}

\begin{figure}
\centering
\subcaptionbox{Ray mask\label{fig:overview_ray}}{
\begin{tikzpicture}[scale=0.95]
    \hexagon{(0,0)}{$\mathcal{A}^r$}
    \bbox{(0,0)}{$\mathcal{A}$}
    
    \filldraw[] (0,0) circle (2pt) node[above right] {$c$};
    
    \draw (0,0) -- (1.5, -1.5);
    \draw [very thick, masked_c, arrows={-latex}] (1.1,-1.1) -- (0.65,-0.65);
    \filldraw[] (1.1,-1.1) circle (2pt) node[above right] {$a$};
    \filldraw[masked_c] (0.65,-0.65) circle (2pt) node[above right] {$a^r$};
    
    \draw (0,0) -- (0, 1.5);
    \draw [very thick, masked_c, arrows={-latex}] (0,1.5) -- (0.0,1.0);
    \filldraw[] (0,1.5) circle (2pt) node[below right] {$a$};
    \filldraw[masked_c] (0,1.0) circle (2pt) node[below right] {$a^r$};
    
    \draw (0,0) -- (-2.2, 0.5);
    \draw [very thick, masked_c, arrows={-latex}] (-0.8,0.182) -- (-0.35,0.08);
    \filldraw[] (-0.8,0.182) circle (2pt) node[below] {$a$};
    \filldraw[masked_c] (-0.35,0.08) circle (2pt) node[below] {$a^r$};

    \begin{scope}[shift={(0, 3.0)}]
        \node {
            \begin{tabular}{ll}
                $\mathcal{A}^r$ & relevant action set \\
                $\mathcal{A}^l$ & latent action set \\
                $a^r$ & relevant action \\
                $c$ & center of $\mathcal{A}^r$ \\
                $G$ & generator mat. of $\mathcal{A}^r$ \\
                $\pi_\theta^r$ & relevant policy \\
            \end{tabular}
        };
        
    \end{scope}
\end{tikzpicture}
}
\subcaptionbox{Generator mask\label{fig:overview_gen}}{
\begin{tikzpicture}[scale=0.95]
    \hexagon{(0,0)}{$\mathcal{A}^r$}
    \bbox{(0,0)}{$\mathcal{A}$}

    \begin{scope}[shift={(-1.7,3.0)}, scale=0.5]
      \pgfmathsetmacro{\cubex}{1}
        \pgfmathsetmacro{\cubey}{1}
        \pgfmathsetmacro{\cubez}{1}
        \draw[black,fill=gray] (0,0,0) -- ++(-\cubex,0,0) -- ++(0,-\cubey,0) -- ++(\cubex,0,0) -- cycle;
        \draw[black,fill=gray] (0,0,0) -- ++(0,0,-\cubez) -- ++(0,-\cubey,0) -- ++(0,0,\cubez) -- cycle;
        \draw[black,fill=gray] (0,0,0) -- ++(-\cubex,0,0) -- ++(0,0,-\cubez) -- ++(\cubex,0,0) -- cycle;
  
    \end{scope}

    \begin{scope}[shift={(0,3.4)}]
        \draw[thick, arrows={-latex}] (-1.0,0) -- (2.2,0) node[above left]{$G_{(\cdot , 3)}$};
        \draw[PLOTPurple, domain=-1.0:1.8, smooth, samples=100] plot (\x, {0.5 * exp(-4*(\x-0.2)*(\x-0.2)))});
        \filldraw[PLOTPurple] (0.25,0.48) circle (1.5pt) node[below] {$a_3$};
    \end{scope}
    
    \begin{scope}[shift={(0,2.6)}]
        \draw[thick, arrows={-latex}] (-1.0,0) -- (2.2,0) node[above left]{$G_{(\cdot , 2)}$};
        \draw[PLOTOrange, domain=-1.0:1.8, smooth, samples=100] plot (\x, {0.5 * exp(-4*(\x-0.5)*(\x-0.5)))});
        \filldraw[PLOTOrange] (0.7,0.41) circle (1.5pt) node[below] {$a_2$};
    \end{scope}
    
    \begin{scope}[shift={(0,1.8)}]
        \draw[thick, arrows={-latex}] (-1.0,0) -- (2.2,0) node[above left]{$G_{(\cdot , 1)}$};
        \draw[PLOTPink, domain=-1.0:1.8, smooth, samples=100] plot (\x, {0.5 * exp(-4*(\x+0.1)*(\x+0.1)))});
        \filldraw[PLOTPink] (-0.25,0.45) circle (1.5pt) node[below] {$a_1$};
    \end{scope}

    \draw[] (-2, 3.5) node{$\mathcal{A}^l$};
    \draw [decorate,decoration={brace,amplitude=10pt},rotate=0] (-1,1.7) -- (-1,4.0);
    
    \filldraw[] (0,0) circle (2pt) node[above left] {$c$};

    \draw[thick, PLOTPink, arrows={-latex}] (0, 0) -- (0, -0.4);
    \draw[thick, PLOTOrange, arrows={-latex}] (0, -0.4) -- (0.5, -0.8);
    \draw[thick, PLOTPurple, arrows={-latex}] (0.5, -0.8) -- (0.7, -0.55);
    
    \filldraw[masked_c] (0.7, -0.55) circle (2pt) node[above right] {$a^r$};
\end{tikzpicture}
}
\subcaptionbox{Distributional mask\label{fig:overview_dist}}{
\begin{tikzpicture}[scale=0.95]
    \hexagon{(0,0)}{$\mathcal{A}^r$}
    \bbox{(0,0)}{$\mathcal{A}$}
 
    \begin{scope}[shift={(0,1.8)}]
        \draw[thick, arrows={-latex}] (-2.2,0) -- (2.2,0); 
        
        \draw[thick, domain=-2.2:2.0, smooth, samples=100] 
            plot (\x, {0.7 * exp(-2*(\x+0.8)*(\x+0.8)))});
        \draw (-0.3, 0.4) -- (0.5, 0.4) node[right]{$\pi_\theta(a | s)$};
        
        \draw[thick, masked_c, domain=-1.5:1.5, smooth, samples=100] 
            (-1.5,0) -- plot (\x, {1.1 * exp(-2*(\x+0.8)*(\x+0.8)))});
        \draw[masked_c] (-0.5, 0.9) -- (0.5, 0.9) node[right]{$\pi_\theta^r(a^r | s)$};
    \end{scope}

    \draw[dashed] (-1.5,0) -- (-1.5, 1.8);
    \draw[dashed] (1.5,0) -- (1.5, 1.8);

    \filldraw[masked_c] (-0.7, -0.1) circle (2pt) node[below right] {$a^r$};
    
    \def\point{(-0.8, 0)};
    \foreach\i in {0,0.1,...,1.0} {
        \clip (-1.5, 0) -- (-0.75, 1) -- (0.75, 1) -- (1.5, 0) -- (0.75, -1) -- (-0.75, -1) -- cycle;
        \draw[thick, opacity=\i, masked_c, domain=-1:1] \point ellipse ({2-2*\i} and {1-\i});         
    }

\end{tikzpicture}
}
\caption{Illustration of masking methods in action space $\mathcal{A}$ with a hexagon-shaped relevant action set $\mathcal{A}^r$. The \rayapproach{} radially maps the actions towards the center of the relevant action set. The \genapproach{} employs the latent action space $\mathcal{A}^l$, which is the generator space of the zonotope modeling the relevant action set. The \distapproach{} augments the policy probability density function so that it is zero outside the relevant action set.}
\label{fig:overview}
\end{figure}

\subsection{Ray mask}\label{sec:ray}

The \rayapproach{} maps the action set $\mathcal{A}$ to $\mathcal{A}^r$ by scaling each action $a$ alongside a ray from the center of $\mathcal{A}^r$ to the boundary of $\mathcal{A}$, as shown in Figure \ref{fig:overview_ray}.
Specifically, the relevant policy $\pi_\theta^r$ results from  mapping the action $a$, sampled from $\pi_\theta(a | s)$, using the function $g_\mathrm{R}: \mathcal{A} \rightarrow \mathcal{A}^r$:
\begin{equation}
    a^r = g_\mathrm{R}(a) = c + \frac{\lambda_{\mathcal{A}^r}(a)}{\lambda_\mathcal{A}(a)} (a - c).
\end{equation}
Here, $\lambda_\mathcal{A}(a)$ and $\lambda_{\mathcal{A}^r}(a)$ denote the distances of the action $a$ to the boundaries of the relevant action set and action space, respectively, measured from the center of the relevant action set $c$ in the direction of $a$. Note that computing $\lambda_{\mathcal{A}^r}(a)$ for a zonotope requires solving a convex optimization problem, as specified in Appendix \ref{app:zono_boundary_points}. Yet, the \rayapproach{} is applicable for all convex sets, for which we can compute the center and boundary points. 
Since $g_\mathrm{R}(a)$ is bijective (see Appendix \ref{app:bijectivityproof} for a detailed proof), we can apply a change of variables \cite[Eq. 1.27]{bishop_pattern_2006} to compute the relevant policy 
\begin{equation}\label{eq:feas_policy_ray}
    \pi_\theta^r(a^r | s) = \pi_\theta \left( g_\mathrm{R}^{-1}(a^r) | s \right) \left\vert \text{det} \left( \frac{\mathrm{d}}{\mathrm{d} a^r} g_\mathrm{R}^{-1} ( a^r) \right) \right\vert,
\end{equation}
where $g_\mathrm{R}^{-1}(a^r) = a$ is the inverse of $g_\mathrm{R}$.
In general, there is no closed form of the distribution $\pi_\theta^r$ available. However, for stochastic policy gradient-based \ac{RL}, we only require to sample from $\pi_\theta^r$ and compute its policy gradient. Samples from $\pi_\theta^r(a^r | s)$ are created by sampling from the original policy $a \sim \pi_\theta (a | s)$ followed by computing $a^r = g_\mathrm{R}(a)$. The policy gradient is derived next.

\begin{proposition}\label{prop:ray_policy_gradient} Policy gradient for the \rayapproach{}. 
\end{proposition}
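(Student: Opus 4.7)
The plan is to apply the standard policy gradient theorem \eqref{eq:policy_gradient} to the transformed policy $\pi_\theta^r$ and then simplify using the explicit change-of-variables formula \eqref{eq:feas_policy_ray}. Since $g_{\mathrm{R}}$ depends only on the geometry of $\mathcal{A}$ and $\mathcal{A}^r$ (which are specified by the state, not the policy parameters), the Jacobian determinant factor is independent of $\theta$, and this is the key fact that makes the calculation collapse to something implementable.

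First I would write the policy gradient for $\pi_\theta^r$ directly from \eqref{eq:policy_gradient}:
\begin{equation*}
\nabla J(\pi_\theta^r) = \mathbb{E}_{\pi_\theta^r}\!\left[ \nabla_\theta \log \pi_\theta^r(a^r | s)\, A_{\pi_\theta^r}(a^r, s) \right].
\end{equation*}
Then I would substitute \eqref{eq:feas_policy_ray}, take the logarithm (turning the product into a sum), and differentiate with respect to $\theta$. Because $g_{\mathrm{R}}^{-1}$ and hence $\bigl|\det(\mathrm{d}g_{\mathrm{R}}^{-1}/\mathrm{d}a^r)\bigr|$ are $\theta$-independent, the log-Jacobian term vanishes under $\nabla_\theta$, leaving
\begin{equation*}
\nabla_\theta \log \pi_\theta^r(a^r | s) = \nabla_\theta \log \pi_\theta\bigl(g_{\mathrm{R}}^{-1}(a^r) \,\big|\, s\bigr).
\end{equation*}

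Next I would rewrite the outer expectation. By the bijectivity of $g_{\mathrm{R}}$ (Appendix \ref{app:bijectivityproof}), sampling $a^r \sim \pi_\theta^r(\cdot | s)$ is equivalent to sampling $a \sim \pi_\theta(\cdot | s)$ and setting $a^r = g_{\mathrm{R}}(a)$; this is exactly the sampling rule described before the proposition. Using this pushforward identity to change variables in the expectation gives
\begin{equation*}
\nabla J(\pi_\theta^r) = \mathbb{E}_{a \sim \pi_\theta}\!\left[ \nabla_\theta \log \pi_\theta(a | s)\, A_{\pi_\theta^r}\bigl(g_{\mathrm{R}}(a), s\bigr) \right],
\end{equation*}
which is the form the authors will want, since it only requires evaluating the unconstrained policy and its score function while using the advantage estimated at the masked action.

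The main obstacle, if there is one, is not the algebra but making sure the change-of-variables is applied cleanly: one has to be careful that the Jacobian factor appearing inside $\log \pi_\theta^r$ and the Jacobian absorbed by the pushforward of the expectation are the same factor, so that nothing spurious survives. Once bijectivity of $g_{\mathrm{R}}$ and $\theta$-independence of the geometry are invoked, the two effects cancel exactly, and the statement reduces to the original policy gradient \eqref{eq:policy_gradient} with the advantage evaluated at $g_{\mathrm{R}}(a)$.
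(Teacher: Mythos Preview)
Your proposal is correct and follows essentially the same route as the paper: apply the policy gradient theorem \eqref{eq:policy_gradient} to $\pi_\theta^r$, use \eqref{eq:feas_policy_ray}, and observe that the Jacobian determinant is $\theta$-independent so the score function reduces to $\nabla_\theta \log \pi_\theta(a|s)$. The only difference is cosmetic: the paper stops at the expectation over $\pi_\theta^r$ (equation \eqref{eq:policy_gradient_ray}), whereas you take one further (equivalent) step and rewrite it as an expectation over $a\sim\pi_\theta$ via the pushforward, which is a harmless and arguably clearer restatement.
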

The policy gradient of $\pi_\theta^r(a^r | s)$ for the \rayapproach{} is 
\begin{equation}\label{eq:policy_gradient_ray}
    \nabla_\theta J \left( \pi_\theta^r(a^r | s) \right) 
    = \mathbb{E}_{\pi_\theta^r} \left[ \nabla_\theta \log \pi_\theta(a | s) 
    A_{\pi_\theta^r} (a^r, s) \right],
\end{equation}
where $A_{\pi_\theta^r} (a^r, s)$ is the advantage function associated with $\pi_\theta^r(a^r | s)$.

\begin{proof}
The determinant in \eqref{eq:feas_policy_ray} is independent of $\theta$, i.e.,
\begin{equation}
    \nabla_\theta \text{det} \left( \frac{\mathrm{d}}{\mathrm{d}a^r} g_\mathrm{R}^{-1} ( a^r) \right) = 0,
\end{equation}
which simplifies the score function of $\pi_\theta^r(a^r | s)$ to
\begin{equation}\label{eq:score_function_ray}
    \nabla_\theta \log \pi_\theta^r(a^r | s) = \nabla_\theta \log \pi_\theta(a | s).
\end{equation}

Combining \eqref{eq:score_function_ray} and the general form of the policy gradient in~\eqref{eq:policy_gradient} for $\pi_\theta^r(a^r | s)$ results in \eqref{eq:policy_gradient_ray}.

\end{proof}

\subsection{Generator mask}\label{sec:generator}

Zonotopes can be interpreted as the map of a hypercube in the generator space to a lower-dimensional space (see Section \ref{sec:prelim_model_sets}). The \genapproach{} is based on exploiting this interpretation by letting the \ac{RL} agent select actions in the hypercube of the generator space. 
Since the size of the output layer of the policy network cannot change during the learning process, we fix the dimension of the generator space. The \genapproach{} requires the following assumption:
\begin{assumption}\label{ass:generator_mask}
    The relevant action set $\mathcal{A}^r(s)$ is represented by a zonotope $\langle c(s), G(s) \rangle_\mathcal{Z}$, with $G \in \mathbb{R}^{N \times P}$ and $c \in \mathbb{R}^N$, and a state-invariant number of generators $P$.
\end{assumption}
Note that in practice, Assumption~\ref{ass:generator_mask} can often be trivially fulfilled by choosing sufficiently many generators, and the number of generators $P$ is usually the output dimension of the parametrized policy. 
The domain of the policy $\pi_\theta(a | s)$ is the hypercube $\mathcal{A}^l = [-1, 1]^P$, which can be interpreted as a latent action space, and the domain of the relevant policy $\pi_\theta^r(a^r | s)$ is a subset of the action space $\mathcal{A}^r \subseteq \mathcal{A}$ (see Figure \ref{fig:overview_gen}). 

To derive the policy gradient of the \genapproach{}, we assume:
\begin{assumption}\label{ass:Gaussian_policy}
    $\pi_\theta(a|s)$ is a parametrized normal distribution $\mathcal{N}(a; \mu_\theta, \Sigma_\theta)$.
\end{assumption}

\begin{proposition} 
The relevant policy of the \genapproach{} is
\begin{equation}\label{eq:feas_policy_gen}
    \pi_\theta^r (a|s) = \mathcal{N}(a; G \mu_\theta + c, G \Sigma_\theta G^T).
\end{equation}
\end{proposition}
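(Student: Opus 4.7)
The plan is to recognize the generator mask as the affine transformation $a^r = c + G\beta$ applied to a latent action $\beta$ sampled in $\mathcal{A}^l$, and then invoke the standard fact that affine images of multivariate Gaussians remain multivariate Gaussian. First, I would make explicit that under the generator mask the \ac{RL} agent samples $\beta \sim \pi_\theta(\cdot\,|\,s) = \mathcal{N}(\mu_\theta,\Sigma_\theta)$ in the latent hypercube by Assumption~\ref{ass:Gaussian_policy}, and that the executed action is obtained by the map $\beta \mapsto c + G\beta$, which coincides with the linear-map-plus-Minkowski-translation identity already stated in \eqref{zono_operations}.

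Next, I would compute the first two moments of $a^r = c + G\beta$ using linearity of expectation and the bilinearity of the covariance operator, yielding $\mathbb{E}[a^r] = c + G\mu_\theta$ and $\mathrm{Cov}(a^r) = G\Sigma_\theta G^\top$. To establish that $a^r$ is actually Gaussian with these moments — and not merely a distribution sharing the first two moments — I would identify the distribution via its characteristic function:
\begin{align*}
\varphi_{a^r}(t)
&= \mathbb{E}\!\left[e^{i t^\top (c + G\beta)}\right]
= e^{i t^\top c}\,\varphi_\beta\!\left(G^\top t\right) \\
&= \exp\!\left(i t^\top (G\mu_\theta + c) - \tfrac{1}{2} t^\top G\Sigma_\theta G^\top t\right),
\end{align*}
which is exactly the characteristic function of $\mathcal{N}(G\mu_\theta + c,\, G\Sigma_\theta G^\top)$ and therefore uniquely identifies the distribution of $a^r$.

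The main obstacle — and the reason I would prefer the characteristic-function route over a direct change-of-variables on densities as used for the \rayapproach{} in \eqref{eq:feas_policy_ray} — is that $G \in \mathbb{R}^{N \times P}$ is in general not square, and the map $\beta \mapsto c + G\beta$ is neither bijective nor measure-preserving in any simple sense. When $P > N$ the map is not injective, and when $P < N$ or $G$ is rank-deficient the image distribution is supported on a lower-dimensional affine subspace of $\mathcal{A}$ and thus admits no Lebesgue density on $\mathbb{R}^N$. The characteristic-function argument handles both the non-degenerate and degenerate cases uniformly, since a multivariate Gaussian (possibly degenerate) is fully determined by its mean vector and covariance matrix. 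I would therefore close by remarking that \eqref{eq:feas_policy_gen} should be interpreted in this generalized sense, which is exactly what is needed downstream for sampling and for computing the score function with respect to $\theta$.
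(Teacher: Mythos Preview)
Your proposal is correct and takes essentially the same approach as the paper: both recognize $a^r = c + G\beta$ as an affine map and invoke the closure of the multivariate Gaussian family under affine transformations. The paper simply cites a textbook theorem for this fact, whereas you supply the characteristic-function argument explicitly and add commentary on the degenerate case; the underlying idea is identical.
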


\begin{proof}
The \genapproach{} $g_\mathrm{G}: \mathcal{A}^l \rightarrow \mathcal{A}^r$ is:
\begin{equation}\label{eq:genmapping}
    a^r = g_\mathrm{G}(a) = c + G a,
\end{equation}
which is a linear function. Therefore, the proof directly follows from the linear transformation of multivariate normal distributions [Thm. 3.3.3]\cite{tong_multivariate_1990}.
\end{proof}

Note that the \rayapproach{} and \genapproach{} are mathematically equivalent to the approach in \cite{Krasowski2023b.ProvablySafeRLSurvey} if the relevant action set is constrained to intervals. 
Since $g_\mathrm{G}(a)$ is not bijective in general, we cannot derive the gradient through a change of variables, as for the \rayapproach{}.

\begin{proposition}\label{prop:gen_policy_gradient}
The policy gradient for $\pi_\theta^r(a^r | s)$ as defined in \eqref{eq:feas_policy_gen} with respect to $\mu_\theta$ and $\Sigma_\theta$ is
\begin{align}
    \nabla_{\mu_\theta} \log \pi_\theta^r(a^r | s) 
    = & \, G^T (G \Sigma_\theta G^T)^{-1} (a^r - c - G \mu_\theta) \label{eq:partial_mu}, \\
    \nabla_{\Sigma_\theta} \log \pi_\theta^r(a^r | s) 
    = & \, - \frac{1}{2} \big( G^T (G \Sigma_\theta G^T)^{-1} G - G^T (G \Sigma_\theta G^T)^{-1} (a^r - c - G \mu_\theta) \label{eq:partial_sigma} \\ 
    &(a^r - c - G \mu_\theta)^T (G \Sigma_\theta G^T)^{-1} G \big) \nonumber.
\end{align}
\end{proposition}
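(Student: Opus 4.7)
The plan is to start directly from the density given in \eqref{eq:feas_policy_gen}, write out $\log \pi_\theta^r(a^r|s)$ explicitly as the log of a multivariate Gaussian, and then differentiate term by term using standard matrix-calculus identities. Concretely,
\begin{equation*}
\log \pi_\theta^r(a^r | s) = -\tfrac{N}{2}\log(2\pi) - \tfrac{1}{2}\log\det(G\Sigma_\theta G^T) - \tfrac{1}{2}(a^r - c - G\mu_\theta)^T (G\Sigma_\theta G^T)^{-1} (a^r - c - G\mu_\theta),
\end{equation*}
where I implicitly assume that $G$ has full row rank so that $G\Sigma_\theta G^T$ is positive definite whenever $\Sigma_\theta$ is; this holds as soon as the zonotope $\mathcal{A}^r$ is $N$-dimensional, which is the relevant case.

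For the $\mu_\theta$-gradient I would set $y := a^r - c - G\mu_\theta$ and note that only the last term depends on $\mu_\theta$. With $M := (G\Sigma_\theta G^T)^{-1}$ symmetric, the chain rule yields $\nabla_{\mu_\theta}\bigl(y^T M y\bigr) = -2 G^T M y$, and dividing by $-2$ gives exactly \eqref{eq:partial_mu}.

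For the $\Sigma_\theta$-gradient I would apply two well-known identities together with the chain rule. First, $\partial \log\det(X)/\partial X = X^{-T}$ applied to $X = G\Sigma_\theta G^T$ gives $\nabla_{\Sigma_\theta}\log\det(G\Sigma_\theta G^T) = G^T (G\Sigma_\theta G^T)^{-1} G$, using symmetry of the inner matrix. Second, $\partial (y^T X^{-1} y)/\partial X = -X^{-1} y y^T X^{-1}$ composed with the linear map $\Sigma_\theta \mapsto G\Sigma_\theta G^T$ gives $\nabla_{\Sigma_\theta}\bigl(y^T (G\Sigma_\theta G^T)^{-1} y\bigr) = -G^T (G\Sigma_\theta G^T)^{-1} y y^T (G\Sigma_\theta G^T)^{-1} G$. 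Multiplying each by $-\tfrac{1}{2}$ and summing yields \eqref{eq:partial_sigma}.

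The main obstacle is bookkeeping the chain-rule step for $\Sigma_\theta$: one must be careful that the derivative identities used treat $\Sigma_\theta$ as an unconstrained matrix variable (the symmetric-variable variant introduces an extra symmetrization that is already consistent here because $G^T(G\Sigma_\theta G^T)^{-1}G$ and $G^T(G\Sigma_\theta G^T)^{-1}yy^T(G\Sigma_\theta G^T)^{-1}G$ are symmetric), and that $G\Sigma_\theta G^T$ is actually invertible. Everything else is a routine application of the Gaussian log-density and two standard matrix-derivative formulas; no further decomposition of $G$ or $\Sigma_\theta$ is needed.
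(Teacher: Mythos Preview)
Your proposal is correct and follows essentially the same route as the paper: write out the Gaussian log-density of $\pi_\theta^r$ and differentiate term by term with the standard identities $\partial \log\det X/\partial X = X^{-1}$ and $\partial(y^T X^{-1} y)/\partial X = -X^{-1} y y^T X^{-1}$, composed with the chain rule through $\Sigma_\theta \mapsto G\Sigma_\theta G^T$. You are in fact slightly more careful than the paper in flagging the full-row-rank assumption on $G$ needed for invertibility of $G\Sigma_\theta G^T$ and the unconstrained-versus-symmetric matrix-derivative convention.
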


\begin{proof}
The proposition is proven in Appendix \ref{app:policy_gradient_gen}.
\end{proof}

Note that for the special case where $G^{-1}$ exists, i.e., $G$ is square and non-singular, the expressions in \eqref{eq:partial_mu} and \eqref{eq:partial_sigma} simplify to $\nabla_{\mu_\theta} \log \pi_\theta(a | s)$, and $\nabla_{\Sigma_\theta} \log \pi_\theta(a | s)$, respectively, when Assumption~\ref{ass:Gaussian_policy} holds (see Proposition \ref{prop:grad_gen_invertibel} in Appendix \ref{app:policy_gradient_gen}). While the generator matrix will commonly not be square, as usually $P > N$, there are cases where $P = N$ is a valid choice, e.g., if there is a linear dependency between the action dimensions as for the 2D Quadrotor dynamics (see \eqref{eq:2dquadrotordynamics}).

\subsection{Distributional mask}\label{sec:distribution}

The intuition behind the \distapproach{} comes from discrete action masking, where the probability for irrelevant actions is set to $0$ \cite{Huang2022}.
For continuous action spaces, we aim to achieve the same by ensuring that actions are only sampled from the relevant action set $\mathcal{A}^r$, by setting the density values of the relevant policy distribution $\pi_\theta^r(a^r | s)$ to zero for actions outside the relevant action set (see Figure \ref{fig:overview_dist}). For the one-dimensional case, this can be expressed by the truncated distribution \cite{burkardt2018truncated}. In higher dimensions, the resulting policy distribution is
\begin{equation}\label{eq:feas_policy_dist}
    \pi_\theta^r(a^r | s) = \frac{\phi(a, s) \pi_\theta(a | s)} {\int_{\mathcal{A}^r} \pi_\theta(\tilde{a} | s) \mathrm{d} \tilde{a}},
\end{equation}
where $\phi(a, s)$ is the indicator function
\begin{equation}
    \phi(a, s) = 
    \begin{cases}
       1  & \text{if } a \in \mathcal{A}^r, \\
       0  & \text{otherwise}.
    \end{cases}
\end{equation}

Since there is no closed form of this distribution, we employ Markov chain Monte Carlo sampling to sample actions from the policy. More specifically, we utilize the random direction hit-and-run algorithm: a geometric random walk that allows sampling from a non-negative, integrable function \mbox{$f: \mathbb{R}^N \rightarrow \mathbb{R}_+$}, while constraining the samples to a bounded set \cite{zabinsky_hit-and-run_2013}. The algorithm iteratively chooses a random direction from the current point, computes the one-dimensional, truncated probability density of $f$ along this direction, and samples a new point from this density. The approach is particularly effective for high-dimensional spaces where other sampling methods might struggle with convergence or efficiency. For the \distapproach{}, $f$ is the policy $\pi_\theta(a|s)$, and $\mathcal{A}^r$ is the set. As suggested by \cite{lovasz_hit-and-run_2006}, we execute $N^3$ iterations before accepting the sample. 
To estimate the integral in \eqref{eq:feas_policy_dist}, we use numerical integration with cubature \cite{genz_adaptive_2003}, which is a method to approximate the definite integral of a function $l: \mathbb{R}^N \rightarrow \mathbb{R}$ over a multidimensional geometric set.

\begin{proposition}\label{prop:dist_policy_gradient}
The policy gradient for the \distapproach{} is 
\begin{equation}\label{eq:policy_gradient_dist}
    \nabla_\theta \log \pi_\theta^r(a^r | s) = \nabla_\theta \log \pi_\theta(a | s) - \nabla_\theta \log \int_{\mathcal{A}^r} \pi_\theta(\tilde{a} | s) \mathrm{d} \tilde{a}.
\end{equation}
\end{proposition}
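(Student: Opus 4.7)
The proof is essentially a direct calculation, so my plan is to take logarithms on both sides of the definition \eqref{eq:feas_policy_dist} and then differentiate with respect to $\theta$. The only subtlety is handling the indicator function $\phi(a,s)$.

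First I would observe that by construction of the hit-and-run sampler, the action $a^r$ produced by the \distapproach{} always lies in $\mathcal{A}^r$, so $\phi(a^r, s) = 1$ and hence $\log \phi(a^r, s) = 0$ on the support where $\pi_\theta^r$ is nonzero. Taking the logarithm of \eqref{eq:feas_policy_dist} therefore gives
\begin{equation*}
    \log \pi_\theta^r(a^r | s) = \log \pi_\theta(a^r | s) - \log \int_{\mathcal{A}^r} \pi_\theta(\tilde{a} | s) \, \mathrm{d} \tilde{a}.
\end{equation*}

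Next I would take $\nabla_\theta$ of both sides. The indicator $\phi(a,s)$ does not depend on $\theta$, so it contributes nothing; the region $\mathcal{A}^r$ in the denominator is likewise $\theta$-independent, so the gradient passes through the integral boundaries without producing boundary terms. Identifying $a = a^r$ on the support yields \eqref{eq:policy_gradient_dist} immediately.

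The potential obstacle is justifying the interchange of $\nabla_\theta$ and the integral in the second term, i.e., writing $\nabla_\theta \int_{\mathcal{A}^r} \pi_\theta(\tilde{a}|s)\,\mathrm{d}\tilde{a} = \int_{\mathcal{A}^r} \nabla_\theta \pi_\theta(\tilde{a}|s)\,\mathrm{d}\tilde{a}$, which is needed if one wants to evaluate the normalization-term gradient by the usual log-derivative trick. Under Assumption~\ref{ass:stochastic_policy} with a suitably regular parametric family (e.g., Gaussian with smooth $\mu_\theta, \Sigma_\theta$) and a bounded $\mathcal{A}^r$, the dominated convergence theorem justifies the swap. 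I would mention this regularity condition but not belabor it, since the parameterized families used in practice satisfy it trivially. With that caveat, equation \eqref{eq:policy_gradient_dist} follows in one line from the log of the quotient in \eqref{eq:feas_policy_dist}.
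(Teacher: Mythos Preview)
Your proposal is correct and matches the paper's own proof essentially line for line: take the logarithm of \eqref{eq:feas_policy_dist}, observe that $a^r \in \mathcal{A}^r$ always holds so the indicator can be dropped, and differentiate. Your additional remark on interchanging $\nabla_\theta$ with the integral is extra care beyond what the proposition itself requires (the stated identity leaves $\nabla_\theta \log \int$ unevaluated), and the paper does not discuss it.
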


\begin{proof}
Equation \eqref{eq:policy_gradient_dist} is obtained by calculating the gradient of the logarithm of \eqref{eq:feas_policy_dist}.
The indicator function $\phi(a, s)$ is not continuous and differentiable, which would necessitate the use of the sub-gradient for learning. However, since $a^r \in \mathcal{A}^r$ always holds, the gradient has to be computed for the continuous part of $\pi_\theta^r(a^r | s)$ only, and thus $\phi(a, s)$ can be omitted. 
\end{proof}

Since the gradient of the numeric integral $\int_{\mathcal{A}^r} \pi_\theta(\tilde{a} | s) \mathrm{d} \tilde{a}$ is intractable for zonotopes, we treat the integral as a constant in practice, and use $\nabla_\theta \log \pi_\theta^r(a^r | s) \approx \nabla_\theta \log \pi_\theta(a | s)$. We discuss potential improvements in Section \ref{sec:limitations}.

\section{Numerical experiments}\label{sec:experiments}

We compare the three continuous action masking methods in four different environments: The simple and intuitive Seeker Reach-Avoid environment, the 2D Quadrotor environment to demonstrate the generalization from the continuous action masking approach in \cite{Krasowski2023b.ProvablySafeRLSurvey}, and the 3D Quadrotor and Mujoco Walker2D environment to show the applicability to action spaces of higher dimension. Because the derivation of the relevant action set is not trivial in practice, we selected four environments for which we could compute intuitive relevant action sets. In particular, for the Seeker and Quadrotor environments, the relevant action set is a safe action set since it is computed so that the agent does not collide (Seeker) or leave a control invariant set (2D and 3D Quadrotor). For the Walker2D environment, the relevant action set is state-independent and models a power supply constraint for the actuators.

For the experiments, we extend the stable-baseline3 \cite{raffin_stable-baselines3_2021} implementation of \ac{PPO} \cite{schulman_proximal_2017} by our masking methods. \Ac{PPO} is selected because it is a widely used algorithm in the field of \ac{RL} and fulfills both Assumptions \ref{ass:stochastic_policy} and \ref{ass:Gaussian_policy} by default. Apart from the masking agents, we also train a baseline agent with standard \ac{PPO} that uses the action space $\mathcal{A}$ and a replacement agent, for which an action outside of $\mathcal{A}^r$ is replaced by a uniformly sampled action from $\mathcal{A}^r$ (see \cite{Krasowski2023b.ProvablySafeRLSurvey} for details). The replacement agent is an appropriate comparison to the masking agents since only relevant actions are executed while the replacement is implemented as part of the environment, which is usually easier than an implementation as part of the policy as for the masking methods. 
We conduct a hyperparameter optimization with $50$ trials for each masking method and environment.
The resulting hyperparameters are reported in Appendix \ref{app:hyperparameters}. All experiments are run on a machine with a Intel(R) Xeon(R) Platinum 8380 2.30 GHz processor and 2 TB RAM.

\subsection{Environments}\label{sec:environments}

We briefly introduce the three environments and their corresponding relevant action sets $\mathcal{A}^r$. Parameters and dynamics for the environments are detailed in Appendix \ref{app:envparameters}. 

\subsubsection{Seeker Reach-Avoid}\label{sec:seeker}
This episodic environment features an agent navigating a 2D space, tasked with reaching a goal while avoiding a circular obstacle (see Figure \ref{fig:seeker}). It is explicitly designed to provide an intuitive relevant action set $\mathcal{A}^r$. The system is defined as in Section \ref{sec:prelim_model_sets}: The dynamics for the position of the agent $s = \left[ s_x, s_y \right]^{T}$ and the action $a = \left[ a_x, a_y \right]^{T}$ is $\dot{s} = a$, and there are no disturbances.

\colorlet{masked_c}{black!40!green}

\begin{figure}
\centering
\begin{tikzpicture}[scale=0.95]

    \begin{scope}[shift={(0,-1)}]
    \fill[opacity=0.0, blue] (0.5,0.5) rectangle (4.5,4.5) node[opacity=1, black, below left]{$\mathcal{S}$}  ;
    
    \draw[very thick] (0.5,0.5) rectangle (4.5,4.5);

    \fill[black!40] (1, 4) circle (6pt);
    \draw[black!50] (1.45, 4.2) node{$s^*$};
    
    \fill[red] (2.2, 2.0) circle (40pt);
    \draw[white] (1.5, 2.9) node{$\mathcal{O}$};

    \begin{scope}[shift={(4, 0.7)}]
    \end{scope}
    
    \begin{scope}[shift={(3.3, 3.1)}]
        \draw[thick, masked_c] (-0.3, 0.1) -- (-0.3, 0.3) -- (-0.1, 0.5) -- (0.1, 0.5)  node[right]{$\mathcal{S}_{\Delta t}$}-- (0.5, 0.1) -- (0.5, -0.1) -- (0.3, -0.3) -- (0.1, -0.3) -- cycle;
        \fill[thick, masked_c, opacity=0.15] (-0.3, 0.1) -- (-0.3, 0.3) -- (-0.1, 0.5) -- (0.1, 0.5) -- (0.5, 0.1) -- (0.5, -0.1) -- (0.3, -0.3) -- (0.1, -0.3) -- cycle;
        
        \fill[black] (0.0, 0.0) circle (2pt) node[above]{$s$};
    \end{scope}
    \end{scope}
    
    \begin{scope}[shift={(7.5, 1.5)},scale=2]
        \draw[thick, gray] (-1, -1) rectangle (1,1) node[below left] {$\mathcal{A}$};
        
        \draw[thick, masked_c] (-0.6, 0.2) -- (-0.6, 0.6) -- (-0.2, 1.0) -- (0.2, 1.0) -- (1.0, 0.2) -- (1.0, -0.2) -- (0.6, -0.6) -- (0.2, -0.6) -- cycle;
        \fill[thick, masked_c, opacity=0.15] (-0.6, 0.2) -- (-0.6, 0.6) -- (-0.2, 1.0) -- (0.2, 1.0) -- (1.0, 0.2) -- (1.0, -0.2) -- (0.6, -0.6) -- (0.2, -0.6) -- cycle;

        \draw (-0.05,0.0) -- (0.05,0.0);
        \draw (0.0,-0.05) -- (0.0,0.05);
        

        \draw[masked_c] (0.3, 0.3) node{$\mathcal{A}^r$};
    \end{scope}

    \begin{scope}[shift={(12, 1)}]
        \node {
            \begin{tabular}{ll}
                $\mathcal{A}$ & action space \\
                $\mathcal{A}^r$ & relevant action set \\
                $\mathcal{S}$ & state space \\
                $\mathcal{S}^r$ & relevant state set \\
                & $=\mathcal{S} \setminus \mathcal{O}$\\
                $\mathcal{O}$ & obstacle \\
                $s$ & agent position \\
                $s^*$ & goal position \\
            \end{tabular}
        };
        
    \end{scope}
\end{tikzpicture}
\caption{The Seeker Reach-Avoid environment with state and action space. The agent (black) has to reach the goal (gray) while avoiding the obstacle (red). The center of the action space is illustrated by a cross and the relevant action set $\mathcal{A}^r$ for the current state is shown in green. The state set reachable at the next time step, by the relevant action set, is $\mathcal{S}_{\Delta t}$.}
\label{fig:seeker}
\end{figure}

The environment is characterized by the position of the agent, the goal position $s^*$, the obstacle position $o$, and the obstacle radius $r_o$. These values are pseudo-randomly sampled at the beginning of each episode, with the constraints that the goal cannot be inside the obstacle and the obstacle blocks the direct path between the initial position of the agent and the goal. The reward for each time step is 
\begin{equation}
    r(a, s) = 
    \begin{cases}
       100  & \text{if goal reached}, \\
       -100  & \text{if collision occurred}, \\
       -1 - \| s^* - s\|_2  & \text{otherwise}.
    \end{cases}
\end{equation}

We compute the relevant action set $\mathcal{A}^r$ so that all actions that cause a collision with the obstacle or the boundary are excluded (see Appendix \ref{app:seeker_optim}).

\subsubsection{2D Quadrotor}\label{sec:2D_quadrotor}
The 2D Quadrotor environment  models a stabilization task and employs an action space where the two action dimensions are coupled, i.e., rotational movement is originating from differences between the action values and vertical movement is proportional to the sum of the action values. The relevant action set is computed based on the system dynamics and a relevant state set (see Appendix, Eq. (\ref{compute_feasible_action_set_zonos})). The reward function is defined as
\begin{equation}\label{eq:2d_quadrotorreward}
    r(a,s) = \exp\left(-\|s-s^{*}\|_2 - \frac{0.01}{2} \left\| \left[ \frac{a_1 - a_\mathrm{1,min}}{a_\mathrm{1,range}}, \frac{a_2 - a_\mathrm{2,min}}{a_\mathrm{2,range}} \right] \right\|_1\right),
\end{equation}
where $s^* = \mathbf{0}$ is the stabilization goal state, $a = [a_1, a_2]$ is the two-dimensional action, $a_\mathrm{i,min}$ is the lower bound for the actions in dimension $i$, and $a_\mathrm{i,range}$ is the absolute difference between the lower and upper bound for the actions in dimension $i$.

\subsubsection{3D Quadrotor}\label{sec:3D_quadrotor}
The third environment models a stabilization task for a quadrotor defined in \cite{Kaynama2015}. The quadrotor has four action dimensions. We use the same reward (see \eqref{eq:2d_quadrotorreward}) and the same calculation approach for the relevant action set (see Appendix, Eq. (\ref{compute_feasible_action_set_zonos})) as for the 2D Quadrotor.

\subsubsection{Mujoco Walker2D}\label{sec:walker_2d}
The relevant action sets of the two Quadrotor and the Seeker environments are sets that only contain safe actions for the current state. Computing safe action sets requires considerable domain knowledge and for our case solving an optimization problems (see Appendix \ref{app:relevant_action_set_compute}). 
However, action masking is not restricted to safe relevant action sets, which we aim to demonstrate on the Mujoco Walker2D environment \cite{todorov2012mujoco}. We extend the environment with a termination criterion, which ends an episode, when the the action violates the constraint $\Vert a \Vert_2 \leq \alpha_p$. This can be viewed as constraining the cumulative power output on all joints to a maximum value $\alpha_p$. We motivate this constraint by having a battery with a maximum power output that should not be exceeded in practice. Accordingly, we define the relevant action set as the static set
\begin{equation}\label{eq:relevant_action_set_walker}
    \mathcal{A}^r = \big\{ a \big\vert \Vert a \Vert_2 \leq \alpha_p \big\}.
\end{equation}
We under-approximate this relevant action set with a zonotope that consists of $36$ generators.

\subsection{Results}\label{sec:results}
\begin{figure}
    \centering
    \includegraphics[scale=1]{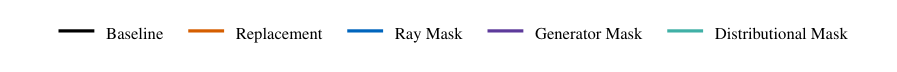}\\ \vspace{-0.3cm}
    \includegraphics[scale=1]{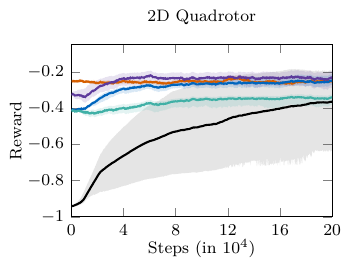}
    \includegraphics[scale=1]{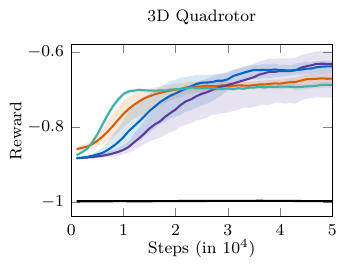} \\
    \includegraphics[scale=1]{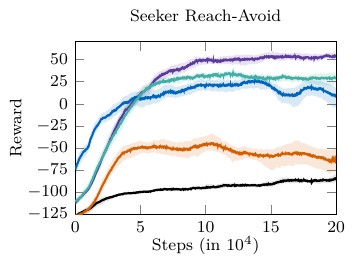}
    \includegraphics[scale=1]{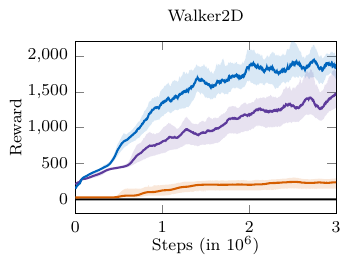}
    \caption{Average reward curves for benchmarks with transparent bootstrapped 95\% confidence interval.}
    \label{fig:rewards}
\end{figure}

The reported training results are based on ten random seeds for each configuration. Figure \ref{fig:rewards} shows the mean reward and the bootstrapped 95\% confidence intervals \cite{patterson2023empirical} for the four environments and Table~\ref{tab:deploy} depicts the mean and standard deviation of the episode return during deployment. First, we present the results for the Seeker and Quadrotor environments for which the relevant action set is state-dependent and only includes safe actions. Then, we detail the results for the Walker2D environment with a static relevant action set. 

For the Seeker and Quadotor environments, the baseline, i.e., \ac{PPO}, converges significantly slower or not at all (see Figure \ref{fig:rewards}). Additionally, the initial rewards when using action masking are significantly higher than for the baseline, indicating that the exploration is indeed constrained to relevant task-fulfilling actions. During deployment (see Table \ref{tab:deploy}), the baseline episode returns are significantly lower than for action masking, while the three masking methods perform similarly. 
More specifically, for the Seeker environment, the relative volume of the relevant action set compared to the global action space is on average $70\%$, and all action masking methods converge significantly faster to high rewards than the baseline. The \genapproach{} and the \distapproach{} achieve the highest final reward. Further, action replacement performs better than the baseline but significantly worse than masking.
For the 2D Quadrotor, the relative volume is on average $28\%$, which is much smaller than for the Seeker environment. In the 2D Quadrotor environment, the \rayapproach{}, \genapproach{}, and action replacement achieve the highest reward. While the baseline converges significantly slower, the final average reward is similar to the one of the \distapproach{}. Please note that the confidence interval for the baseline is significantly larger, because three of the ten runs do not converge and, on average, exhibit a reward of one throughout training. 
Additionally, we observed that if we constrain the relevant action set to intervals for this environment, our optimization problem in (\ref{compute_feasible_action_set_zonos}) often renders infeasible, because the maximal relevant action set cannot be well approximated by an interval. Thus, the masking method proposed in \cite{Krasowski2023b.ProvablySafeRLSurvey} is not suitable for the 2D Quadrotor task. 
The results on the 3D Quadrotor are similar to those on the 2D Quadrotor; again, the \genapproach{} converges the fastest but yields a final reward similar to that of the \rayapproach{} and action replacement. The relative volume of the relevant action set to the global action space is on average $25\%$. Notably, in this environment, the baseline does not learn a meaningful policy.
Based on these three environments with state-dependent relevant action sets, it seems that action masking performs better than action replacement when the relative volume is not too small. 

\begin{table}[tb]
    \caption{Mean and standard deviation of episode return for ten runs per trained model.}
    \label{tab:deploy}
    \centering
    \small
    \begin{tabular}{l c c c c c}
        \toprule
         & Seeker & 2D Quad. & 3D Quad. & Walker2D \\
        \midrule
        Baseline           & $ -71.03 \pm 19.67$   & $-0.80 \pm 0.15$       & $-1.00 \pm 0.00$        & $1.19 \pm 0.06$    \\
        Replacement        & $-60.21 \pm 19.98$  & $-0.25 \pm 0.03$  & $-0.68 \pm 0.09$  & $234.93 \pm 129.5$ \\
        Ray                & $-20.45 \pm 16.39$    & $-0.26 \pm 0.05$       & $-0.63 \pm 0.03$        & $1941.82 \pm 992.83$   \\
        Generator          & $18.60 \pm 22.02$     & $-0.25 \pm 0.02$       & $-0.68 \pm 0.07$        & $1443.62 \pm  702.7$    \\
        Distributional     & $-13.66 \pm 19.97$    & $-0.23 \pm 0.02$       & $-0.66 \pm 0.02$        & --                                    \\
        \bottomrule
    \end{tabular}
\end{table}
The training results of the Walker2D experiment are shown in Figure~\ref{fig:rewards}. While the generator and ray mask both learn a performant policy, the ray mask outperforms by a significant margin. The lower performance of the \genapproach{} is likely due to the high-dimensional generator space with $36$ dimensions. This is supported by initial experiments with $12$ generators (i.e., a more conservative under-approximation of the $L_2$-norm) where the \genapproach{} performed better compared to the \rayapproach{}. Replacement performs significantly worse than the two masking approaches, and the \ac{PPO} baseline does not learn a meaningful policy since the environment is frequently reset due to violations of the power constraint in \eqref{eq:relevant_action_set_walker}. The frequent terminations occur since in six action dimensions the relative volume of the unit ball compared to the unit box is $\approx 8\%$, i.e., more than $92\%$ of actions are violating the power constraint. To compare masking to a learning baseline, we also evaluated standard \ac{PPO} without constraints, which performs slightly better than ray masking but almost always uses actions outside $\mathcal{A}^r$ (see Appendix \ref{app:walkerextension}). The deployment results in Table~\ref{tab:deploy} reflect similar results as the training; the \rayapproach{} achieves better rewards than the \genapproach{}, followed by replacement, and the baseline performs the worst.
We excluded the distributional mask for the Walker2D, since its computation time is approximately $170$ times slower than the baseline, compared to a $1.6$ increase for the generator, $2.7$ for the ray mask, and $2.5$ for action replacement (see Table~\ref{tab:runtime}). The severely increased computational cost for the distributional mask arises from the geometric random walks, which scale cubically with action dimensions.

\subsection{Discussion and limitations}\label{sec:limitations}
Our experimental results indicate that continuous action masking with zonotopes can improve both the sample efficiency and the final policy of \ac{PPO}.
While the sample efficiency is higher in our experiments, computing the relevant action set adds computational load as shown by the increased computation times (see Appendix \ref{app:runtime}). 
Thus, in practice, a tight relevant action set might require more computation time than the additional samples for standard \ac{RL} algorithms. Yet, if the relevant action set provides guarantees, e.g., is a set of verified safe actions, this increased computation time is often acceptable. Additionally, the computational effort for the masks differs. Given a relevant action zonotope, the \genapproach{} adds a matrix-vector multiplication, which scales quadratically with action dimensions, the \rayapproach{} is dominated by the computation of the boundary points, which scales polynomially with action dimensions \cite{kulmburg2021co}, and the \distapproach{} scales cubically with the dimension of the action space due to the mixing time of the hit-and-run algorithm \cite{lovasz_hit-and-run_2006}.  Note that for the Walker2D environment with six action dimensions, the computation time for the hit-and-run algorithm is so high that the distributional mask evaluation was infeasible.  

The \rayapproach{} and \genapproach{} are based on functions $g_\text{R}$ and $g_\text{G}$ that map to relevant actions. There are two different approaches of incorporating these functions into \ac{RL} algorithms. One is to apply the mapping as part of the environment on an action that is sampled by a standard \ac{RL} policy. The second option, which we use, is to consider the masking as part of the policy, which creates the relevant policy as in \eqref{eq:actionmasking}. This has three main benefits over integrating masking as part of the environment. First, the actions passed to the environment are better interpretable, e.g., for the \genapproach{}, adding the masking mapping function to the environment leads to a policy that samples actions from the generator space, which commonly does not have an intuitive interpretation. Second, more formulations of the functional $h$ are possible, e.g., the \distapproach{}, and, third, the mapping function can be included in the gradient calculation. For mathematically sound masking as part of the policy, the correct gradient needs to be derived for each \ac{RL} algorithm, and the standard \ac{RL} algorithms need to be adapted accordingly. However, the empirical benefit could be minor. Thus, future work should investigate the significance of the correct gradient on a variety of tasks. Note that we showed in Proposition~\ref{prop:ray_policy_gradient} that for the \rayapproach{}, the \ac{PPO} gradient with respect to the original policy and relevant policy is the same. Thus, for the \rayapproach{}, it does not matter if it is viewed as part of the policy or the environment.

\Ac{PPO} is a common \ac{RL} algorithm. However, off-policy algorithms such as Twin Delayed DDPG (TD3) \cite{Fujimoto2018} and Soft Actor-Critic (SAC) \cite{haarnoja2018} are frequently employed as well. The \rayapproach{} and \genapproach{} are conceptually applicable for deterministic policies as used in TD3. Yet, the implications on the gradient must be derived for each \ac{RL} algorithm and are subject to future work.
For the \distapproach{}, treating the integral in \eqref{eq:feas_policy_dist} as constant with respect to $\theta$ is a substantial simplification, which might be an explanation for the slightly worse convergence of the \distapproach{} since this introduces an off-policy bias. To address this in future work, one could approximate the integral with a neural network, which has the advantage that it is easily differentiable.

We focus this work on the integration of a convex relevant action set into \ac{RL} and assume that an appropriate relevant action set can be obtained. While convex sets are a significant generalization from previous work \cite{Krasowski2023b.ProvablySafeRLSurvey}, they might not be sufficient for some applications, e.g., tasks where relevant action sets are disjoint. Thus, future work could include investigating hybrid \ac{RL} approaches \cite{neunert2020continuous} to increase the applicability to multiple convex sets or non-convex sets, such as constrained polynomial zonotopes \cite{Kochdumper2023_constrainedzono}. Further, obtaining the relevant action set can be a major challenge in practice, in particular when the relevant action set is different for each state. Such a high state-dependency is likely when the notion of relevance is safety, while for other definitions of action relevance, the relevant action set might be easy to pre-compute, e.g., excluding high steering angles at high velocities. Additionally, there might be an optimal precision of the relevant action set due to two opposing mechanisms. On the one hand, the larger the relevant action set is with respect to the action space, the smaller the sample efficiency gain from action masking might get. On the other hand, a tight relevant action set might require significant computation time to obtain. Thus, future work should investigate efficient methods to obtain sufficiently tight relevant action sets.

\section{Conclusion}
We propose action masking methods for continuous action spaces that focus the exploration on the relevant part of the action set. In particular, we extend previous work on continuous action masking from using intervals as relevant action sets to using convex sets. To this end, we have introduced three masking methods and have derived their implications on the gradient of \ac{PPO}. We empirically evaluated our methods on four benchmarks and observed that the \genapproach{} and \rayapproach{} perform best. If the relevant action set can be described by a zonotope with fixed generator dimensions and the policy follows a normal distribution, the \genapproach{} is straightforward to implement. If the assumptions for the \genapproach{} cannot be fulfilled, the \rayapproach{} is recommended based on our experiments. Because of subpar performance and longer computation time, the \distapproach{} needs to be further improved.
Future work should also investigate a broad range of benchmarks to identify the applicability and limits of continuous action masking with convex sets more clearly.

\newpage
\begin{ack}
We thank Matthias Killer for conducting preliminary experiments. We gratefully acknowledge that this project was funded by the Deutsche Forschungsgemeinschaft (DFG, German Research Foundation) – SFB 1608 – 501798263, AL 1185/9-1, AL 1185/33-1, and the Bavarian Research Foundation project STROM (Energy - Sector coupling and microgrids).

\end{ack}

{
\small

\printbibliography
}

\newpage 
\appendix

\section{Appendix}

\subsection{Proof of bijectivity of the ray mapping function.}\label{app:bijectivityproof}
\begin{lemma}
The mapping function $g: \mathcal{A} \rightarrow \mathcal{A}^r$ of the ray mask $g(a) = c + \frac{\lambda_{\mathcal{A}^r}(a)}{\lambda_{\mathcal{A}}(a)} (a - c)$ is bijective.
\end{lemma}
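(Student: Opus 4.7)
The plan is to exploit the radial structure of $g$: along every ray emanating from $c$, both $\lambda_{\mathcal{A}}$ and $\lambda_{\mathcal{A}^r}$ depend only on the ray's direction, so $g$ reduces to a one-dimensional linear rescaling that is easy to analyze. Concretely, for $a \neq c$ I would write $a = c + t u$ with $u = (a-c)/\|a-c\|_2$ and $t = \|a-c\|_2 > 0$, and observe that $\lambda_{\mathcal{A}}(a) = \lambda_{\mathcal{A}}(u)$ and $\lambda_{\mathcal{A}^r}(a) = \lambda_{\mathcal{A}^r}(u)$ depend only on $u$. Thus
\begin{equation*}
    g(c + t u) = c + t \, \rho(u) \, u, \qquad \rho(u) := \frac{\lambda_{\mathcal{A}^r}(u)}{\lambda_{\mathcal{A}}(u)}.
\end{equation*}
Before the two parts of the argument, I would briefly justify that $c$ lies in $\mathcal{A}^r \subseteq \mathcal{A}$ (so $c \in \mathcal{A}$ too) and that, by convexity (Assumption~\ref{ass:relevant_action_set}) together with boundedness of the sets, each ray from $c$ meets the boundary of each set in a unique point, making $\lambda_{\mathcal{A}}(u), \lambda_{\mathcal{A}^r}(u) \in (0, \infty)$ well-defined and $\rho(u) \in (0,1]$.

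For injectivity, suppose $g(a_1) = g(a_2)$ for $a_1, a_2 \in \mathcal{A}$. Since $g(a) - c$ is a positive scalar multiple of $a - c$, the vectors $a_1 - c$ and $a_2 - c$ must point in the same direction $u$, so $a_i = c + t_i u$ with $t_i > 0$. Equality of the images then forces $t_1 \rho(u) = t_2 \rho(u)$, and since $\rho(u) > 0$ we conclude $t_1 = t_2$, so $a_1 = a_2$. The case $a_1 = c$ is handled separately by noting $g(c) = c$ and $g(a_2) = c$ implies $t_2 = 0$.

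For surjectivity, fix $a^r \in \mathcal{A}^r$. If $a^r = c$, take $a = c$. Otherwise let $u = (a^r - c)/\|a^r - c\|_2$ and $t^r = \|a^r - c\|_2$, and set $a := c + (t^r / \rho(u)) u$. By the radial formula above, $g(a) = a^r$ by construction. It remains to check $a \in \mathcal{A}$, i.e., $t^r / \rho(u) \le \lambda_{\mathcal{A}}(u)$; this follows from $a^r \in \mathcal{A}^r$, which gives $t^r \le \lambda_{\mathcal{A}^r}(u)$, combined with the definition of $\rho(u)$.

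The only real obstacle I anticipate is the bookkeeping at $a = c$ (where $u$ is undefined) and making sure that the ratio $\rho(u)$ is finite and strictly positive; both are resolved cleanly once we invoke $c \in \mathcal{A}^r \subseteq \mathcal{A}$ and the convexity/boundedness of these sets to guarantee well-defined, strictly positive distances $\lambda_{\mathcal{A}}(u)$ and $\lambda_{\mathcal{A}^r}(u)$ along every direction. Everything else is a one-line computation along each ray.
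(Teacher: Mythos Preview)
Your proposal is correct and follows essentially the same approach as the paper: decompose $\mathcal{A}$ into rays from $c$, observe that the scaling ratio $\lambda_{\mathcal{A}^r}/\lambda_{\mathcal{A}}$ is constant along each ray, and reduce bijectivity to a one-dimensional linear problem on each ray. Your surjectivity argument is in fact slightly more direct---you explicitly write down the preimage $a = c + (t^r/\rho(u))\,u$ and verify $a \in \mathcal{A}$---whereas the paper appeals to the intermediate value theorem along each ray.
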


\begin{proof}
We show that $g(a)$ is bijective by showing that the function is both injective and surjective.

For any convex set $\mathcal{A}^r$ with center $c$, we can construct a ray from $c$ through any $a^r \in \mathcal{A}^r$ as $r_\gamma(t) = c + \gamma t$, where $\gamma = \frac{a-c}{\Vert a-c \Vert_2}$, and $t \in \left[ 0, t_{max} \right]$. Since $\mathcal{A}^r \subseteq \mathcal{A}$, this also holds $\forall a \in \mathcal{A}$. By construction, any two distinct rays only intersect in $c$. For all points on a given ray, the scaling factors $\lambda_\mathcal{A}$ and $\lambda_{\mathcal{A}^r}$ are constants, allowing us to rewrite $g(a)$ for all $a=r_\gamma(t)$ as the linear function:
$$
    g \left( r_\gamma(t) \right) = c + \frac{\lambda_{\mathcal{A}^r}}{\lambda_{\mathcal{A}}} \left( r_\gamma(t) - c \right).
$$
To show that $g(a)$ is injective, i.e. $\forall a_1, a_2 \in \mathcal{A}$, if $a_1 \neq a_2 \implies g(a_1) \neq g(a_2)$, consider the following two cases for $a_1 \neq a_2$.

Case 1: If $a_1$ and $a_2$ are on the same ray, then, $a_1 = r_\gamma(t_1)$ and $a_2 = r_\gamma(t_2)$ with $t_1 \neq t_2$. Since $g \left( r_\gamma(t) \right)$ is linear and thereby monotonic, it follows that $g \left( r_\gamma(t_1) \right) \neq g \left( r_\gamma(t_2) \right)$ and consequently $g(a_1) \neq g(a_2)$.

Case 2: Otherwise, $a_1$ and $a_2$ are on different rays and $a_1 \neq a_2 \neq c$, so $g(a_1) \neq g(a_2)$ follows directly as the rays only intersect in $c$.

Thus, $g(a)$ is injective.

For showing $g(a)$ to be surjective, it is sufficient to show that $\forall a^r \in \mathcal{A}^r$, there exists an $a \in \mathcal{A}$, for which $g(a) = a^r$. 

Consider the ray $r_\gamma(t)$ passing through $a^r$. We know that $g \left( r_\gamma(0) \right) = g(c) = c$, and $g \left( r_\gamma(t_{max}) \right)$ is the boundary point of $\mathcal{A}^r$ from $c$ in the direction $d$. Moreover, $a^r$ lies on the line segment between $c$ and $g \left( r_\gamma(t_{\max}) \right)$.
Since $g \left( r_\gamma(t) \right)$ is linear and continuous, according to the intermediate value theorem, $\exists t^* \in \left[ 0, t_{\max} \right]$, for which $g \left( r_\gamma(t^*) \right) = a^r$ for any $a^r \in \mathcal{A}^r$ along the ray $r_\gamma(t)$. 
As we can construct such a ray through any point in $\mathcal{A}^r$, we have shown that for every $a^r \in \mathcal{A}^r$, there exists an $a \in \mathcal{A}$ such that $g(a) = a^r$, thus proving surjectivity.
\end{proof}

\subsection{Policy gradient for the \genapproach}\label{app:policy_gradient_gen}
The log probability density function of the relevant policy for the \genapproach{} is
\begin{equation}
    \log \pi_\theta^r(a^r | s) 
    = - \frac{d}{2} \log (2 \pi) - \frac{1}{2} \log | G \Sigma_\theta G^T | - \frac{1}{2} (a^r - c - G \mu_\theta)^T (G \Sigma_\theta G^T)^{-1} (a^r - c - G \mu_\theta).
\end{equation}
We can derive the gradient w.r.t $\mu_\theta$ as 
\begin{align}\label{eq:app_grad_mu}
\begin{split} 
    \nabla_{\mu_\theta} \log \pi_\theta^r(a^r | s) 
    &= - \frac{1}{2} \nabla_{\mu_\theta} (a^r - c - G \mu_\theta)^T (G \Sigma_\theta G^T)^{-1} (a^r - c - G \mu_\theta) \\
    &= - \frac{1}{2} \left( -2 (G \Sigma_\theta G^T)^{-1} (a^r - c - G \mu_\theta) \right) \nabla_{\mu_\theta} (a^r - c - G \mu_\theta) \\
    &= G^T (G \Sigma_\theta G^T)^{-1} (a^r - c - G \mu_\theta), \\
\end{split}
\end{align}
and w.r.t $\Sigma_\theta$ as
\begin{align}\label{eq:app_grad_sigma}
    &\nabla_{\Sigma_\theta} \log \pi_\theta^r(a^r | s) = \nonumber \\
    &= - \frac{1}{2} \nabla_{\Sigma_\theta} \log | G \Sigma_\theta G^T |
    - \frac{1}{2} \nabla_{\Sigma_\theta} (a^r - c - G \mu_\theta)^T (G \Sigma_\theta G^T)^{-1} (a^r - c - G \mu_\theta) \nonumber \\
    &= - \frac{1}{2} (G \Sigma_\theta G^T)^{-1} \nabla_{\Sigma_\theta} G \Sigma_\theta G^T \\
    &\quad + \frac{1}{2} (G \Sigma_\theta G^T)^{-1} (a^r - c - G \mu_\theta) (a^r - c - G \mu_\theta)^T (G \Sigma_\theta G^T)^{-1} \nabla_{\Sigma_\theta} G \Sigma_\theta G^T \nonumber \\
    &= - \frac{1}{2} \left( G^T (G \Sigma_\theta G^T)^{-1} G - G^T (G \Sigma_\theta G^T)^{-1} (a^r - c - G \mu_\theta) (a^r - c - G \mu)^T (G \Sigma_\theta G^T)^{-1} G \right) \nonumber
\end{align}
We can state the following for the special case when the inverse of $G$ exists.
\begin{proposition}\label{prop:grad_gen_invertibel}
    If $G$ is invertible, $\nabla_\theta \log \pi_\theta^r(a^r | s) = \nabla_\theta \log \pi_\theta(a | s)$ holds for the \genapproach.
\end{proposition}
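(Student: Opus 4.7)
The plan is to give a short route via change of variables and, in parallel, show that the explicit formulas \eqref{eq:partial_mu} and \eqref{eq:partial_sigma} collapse to the standard Gaussian score when $G^{-1}$ exists. These are two facets of the same statement, but each is useful: the first explains \emph{why} the claim is true, and the second makes it operational for readers already thinking in terms of Proposition~\ref{prop:gen_policy_gradient}.

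First, I would note that when $G$ is square and non-singular, the generator map $g_\mathrm{G}(a) = c + Ga$ from \eqref{eq:genmapping} is a $C^\infty$ bijection $\mathcal{A}^l \to \mathcal{A}^r$ with constant Jacobian determinant $|\det G|$ that is independent of $\theta$. The change-of-variables identity therefore reads $\pi_\theta^r(a^r|s) = \pi_\theta(G^{-1}(a^r - c)\,|\,s)\,|\det G|^{-1}$, and the argument used in the proof of Proposition~\ref{prop:ray_policy_gradient} applies verbatim: taking $\log$ and differentiating in $\theta$ annihilates the constant Jacobian term and leaves $\nabla_\theta \log \pi_\theta^r(a^r|s) = \nabla_\theta \log \pi_\theta(a|s)$ with $a = G^{-1}(a^r - c)$. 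This already establishes the proposition without reference to Assumption~\ref{ass:Gaussian_policy}.

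Second, under Assumption~\ref{ass:Gaussian_policy} I would verify the same claim directly from \eqref{eq:partial_mu} and \eqref{eq:partial_sigma}, since this is the form Proposition~\ref{prop:gen_policy_gradient} actually delivers. The key identity is
\begin{equation}
    (G\Sigma_\theta G^T)^{-1} = G^{-T}\Sigma_\theta^{-1}G^{-1},
\end{equation}
valid whenever $G^{-1}$ exists. Plugging this into \eqref{eq:partial_mu} the prefactor collapses as $G^T G^{-T}\Sigma_\theta^{-1} G^{-1}(a^r - c - G\mu_\theta) = \Sigma_\theta^{-1}(a - \mu_\theta)$, which is exactly $\nabla_{\mu_\theta}\log \mathcal{N}(a;\mu_\theta,\Sigma_\theta)$. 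The same substitution applied to each of the two summands in \eqref{eq:partial_sigma} yields $\tfrac{1}{2}\bigl(\Sigma_\theta^{-1}(a-\mu_\theta)(a-\mu_\theta)^T\Sigma_\theta^{-1} - \Sigma_\theta^{-1}\bigr)$, the textbook Gaussian score with respect to the covariance.

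The only mildly technical step is the covariance manipulation, where one has to recognise the cancellations $G^{T}G^{-T} = I$ and $G^{-1}G = I$ on the inner and outer sides of each summand in \eqref{eq:partial_sigma}; this is mechanical but easy to mis-track because of the four-factor product. Everything else is immediate, so I expect the proof to occupy only a few lines in the appendix, with the change-of-variables paragraph serving as the conceptual explanation and the matrix calculation serving as the concrete verification tied to Proposition~\ref{prop:gen_policy_gradient}.
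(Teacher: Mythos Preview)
Your proposal is correct. The second part of your argument---substituting $(G\Sigma_\theta G^T)^{-1} = G^{-T}\Sigma_\theta^{-1}G^{-1}$ into \eqref{eq:partial_mu} and \eqref{eq:partial_sigma} and using $a = G^{-1}(a^r - c)$---is exactly the paper's proof. Your first paragraph, the change-of-variables argument mirroring Proposition~\ref{prop:ray_policy_gradient}, is an addition the paper does not make here: it is more general (it dispenses with Assumption~\ref{ass:Gaussian_policy}) and explains conceptually why invertibility of $G$ puts the generator mask on the same footing as the ray mask. The paper restricts itself to the matrix simplification because Proposition~\ref{prop:grad_gen_invertibel} is framed as a corollary of the explicit Gaussian gradients in Proposition~\ref{prop:gen_policy_gradient}; your extra route buys a cleaner explanation at no cost, while the computational verification keeps the result tied to the formulas readers will actually implement.
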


\begin{proof}
If $G$ is invertible, we can further simplify \eqref{eq:app_grad_mu} to 
\begin{align}
\begin{split} 
    \nabla_{\mu_\theta} \log \pi_\theta^r(a^r | s)
    &= G^T G^{-T} \Sigma_\theta^{-1} G^{-1} (a^r - c - G \mu_\theta) \\
    &= \Sigma_\theta^{-1} \left( G^{-1}(a^r - c) - G^{-1} G \mu_\theta \right) \\
    &= \Sigma_\theta^{-1} \left( a - \mu_\theta \right) = \nabla_{\mu_\theta} \log \pi_\theta(a | s),
\end{split}
\end{align}
and \eqref{eq:app_grad_sigma} to
\begin{align}
\begin{split} 
    \nabla_{\Sigma_\theta} \log \pi_\theta^r(a^r | s) 
    &= - \frac{1}{2} \left( \Sigma_\theta^{-1} - \Sigma_\theta^{-1} G^{-1} (a^r - c - G \mu_\theta) (a^r - c - G \mu_\theta)^T G^{-T} \Sigma_\theta^{-1} \right) \\
    &= - \frac{1}{2} \left( \Sigma_\theta^{-1} - \Sigma_\theta^{-1} (a - \mu_\theta) (a - \mu_\theta)^T \Sigma_\theta^{-1} \right) = \nabla_{\Sigma_\theta} \log \pi_\theta(a | s),
\end{split}
\end{align}
by using $a = g^{-1}(a^r) = G^{-1}(a^r - c)$, and thus proving the statement.
\end{proof}

\subsection{Computation of the relevant action set}\label{app:relevant_action_set_compute}

\subsubsection{General case}

Given an initial state $s_0$, an input trajectory $a_{(\cdot)}$, and a disturbance trajectory $w_{(\cdot)}$, we denote the solution of \eqref{system_equation} at time $t$ as $\xi_t(s_0, a_{(\cdot)}, w_{(\cdot)})$. Assuming a sampled controller with piecewise-constant input $a$, we define the reachable set of \eqref{system_equation} after one time step $\Delta t$ given a set of initial states $\mathcal{S}_0$ and a set of possible inputs $\Tilde{\mathcal{A}} \subseteq \mathcal{A}$ as
\begin{equation}
    \mathcal{R}^e_{\Delta t}(\mathcal{S}_0, \Tilde{\mathcal{A}}) = \big\{\xi_{\Delta t}(s_0, a, w_{(\cdot)}) \, \big| \, \exists x_0 \in \mathcal{S}_0, \exists a \in \Tilde{\mathcal{A}}, \forall t \colon \exists w_{t} \in \mathcal{W} \big\}.
\end{equation}
The reachable set over the time interval $[0, \Delta t]$ is defined as 
\begin{equation}
    \mathcal{R}^e_{[0, \Delta t]}(\mathcal{S}_0, \Tilde{\mathcal{A}}) = \bigcup_{\tau \in [0, \Delta t]} \mathcal{R}^e_{\tau}(\mathcal{S}_0, \Tilde{\mathcal{A}}).
\end{equation}

For many system classes it is impossible to compute the reachable set exactly \cite{platzer2007hybrid}, which is why we generally compute overapproximations $\mathcal{R}(\cdot) \supseteq \mathcal{R}^e(\cdot)$.

To guarantee constraint satisfaction over an infinite time horizon, we use a relevant state set $\mathcal{S}^r$. 
In this work, we choose $\mathcal{S}^r$ to be a robust control invariant set in the sense that we can always find an input which guarantees that the reachable set at the next time step is contained in $\mathcal{S}^r$ and that all state constraints are satisfied in the time interval in between \cite{schaefer2024scalable}:
\begin{equation}
\exists a \in \mathcal{A} \colon 
\mathcal{R}_{\Delta t}(\mathcal{S}^r, a) \subseteq \mathcal{S}^r, \mathcal{R}_{[0, \Delta t]}(\mathcal{S}^r, a) \subseteq \mathcal{S}.
\end{equation}

We compute the relevant action set as the largest set of inputs that allows us to keep the system in the relevant state set in the next time step. 
We define a parameterized action set $\mathcal{A}^r(p)$, where $p \in \mathbb{R}^{n^{p}}$ is a parameter vector.
The relevant action set is then computed with the optimal program
\begin{equation} \label{compute_feasible_action_set_general}
\begin{aligned}
        &\!\max_{p}        &\qquad& \widetilde{\texttt{Vol}}\left(\mathcal{A}^r(p)\right) \\
        & \text{subject to} & &\mathcal{A}^r(p) \subseteq \mathcal{A} \\
        & & &\mathcal{R}_{\Delta t}(\mathcal{S}_0, \mathcal{A}^r(p)) \subseteq \mathcal{S}^r \\
        & & &\mathcal{R}_{[0, \Delta t]}(\mathcal{S}_0, \mathcal{A}^r(p)) \subseteq \mathcal{S},
\end{aligned}
\end{equation}
where $\widetilde{\texttt{Vol}}(\cdot)$ is a proxy function for the volume in the sense that a maximization of $\widetilde{\texttt{Vol}}(\cdot)$ is suboptimally maximizing the volume. 
In the following, we provide a detailed formulation of \eqref{compute_feasible_action_set_general} as an exponential cone program.

\subsubsection{Exponential cone program}

We consider the discrete-time linearization of our system
\begin{equation}\label{lin_sys}
    s_{k+1} = A s_k + B a_k + w_k^{\prime},
\end{equation} 
where $w_k^{\prime} \in \mathcal{W}^{\prime}(s_k)$ additionally contains linearization errors and the enclosure of all possible trajectory curvatures between the two discrete time steps \cite{althoff2010dissertation}.
Furthermore, we assume $\mathcal{S}$, $\mathcal{A}$, $\mathcal{W}$, and $\mathcal{S}^r$ to be zonotopes.

With regard to the input set parameterization, we consider a template zonotope with a predefined template generator matrix $\Tilde{G}$. We use the vector $\Tilde{p} \in \mathbb{R}^{P}_{>0}$ of generator scaling factors to scale the generator matrix. The parameterized template zonotope is then given by 
\begin{equation}
    \mathcal{A}^r(p) = \langle c, \Tilde{G} \, diag(\Tilde{p}) \rangle_{\mathcal{Z}},
\end{equation}
where $p = \begin{bmatrix}
    c & \Tilde{p}
\end{bmatrix}^{\top}$.
In the following, we denote the center and generator matrix of a specific zonotope $\mathcal{Z}$ by $c^{\mathcal{Z}}$ and $G^{\mathcal{Z}}$, respectively. 
A zonotope $\mathcal{Z}_1$ is contained in a zonotope $\mathcal{Z}_2$ if there exist $\Gamma \in \mathbb{R}^{P^{\mathcal{Z}_2} \times P^{\mathcal{Z}_1}}$, $\beta \in P^{\mathcal{Z}_2}$, such that \cite[Corollary 4]{sadraddini2019linear}
\begin{equation}\label{sadraddini_containment}
    \begin{aligned}
        G^{\mathcal{Z}_1} &= G^{\mathcal{Z}_2} \Gamma \\
        c^{\mathcal{Z}_2} -  c^{\mathcal{Z}_1} &= G^{\mathcal{Z}_2} \beta \\
        \|[\Gamma, \beta]\|_{\infty} &\leq 1.
    \end{aligned}
\end{equation}

Based on the linearized system dynamics in \eqref{lin_sys}, the definitions from \eqref{zono_operations}, and using $\mathcal{R}$ as a shorthand for $\mathcal{R}_{\Delta t}(\cdot)$, we have $G^{\mathcal{R}} = \begin{bmatrix}
    A G^{\mathcal{S}_0} & B G^{\mathcal{A}^r} & G^{\mathcal{W}}
\end{bmatrix}$ and $c^{\mathcal{R}} = \begin{bmatrix}
    A c^{\mathcal{S}_0} & B c^{\mathcal{A}^r} & c^{\mathcal{W}}
\end{bmatrix}$. Since we already consider trajectory curvature in the disturbance, we only need to guarantee that the relevant input set is contained in the feasible input set and that the reachable set of the next time step is contained in the relevant state set.
Using the containment condition from \eqref{sadraddini_containment}, we define the auxiliary variables $\Gamma^{\mathcal{R}}$, $\beta^{\mathcal{R}}$, $\Gamma^{\mathcal{A}^r}$, and $\beta^{\mathcal{A}^r}$ and solve
\begin{equation} \label{compute_feasible_action_set_zonos}
\begin{aligned}
        &\!\max_{p}        &\qquad& \widetilde{\texttt{Vol}}\left(\mathcal{A}^r(p)\right) \\
        &\text{subject to} & &G^{\mathcal{R}} - G^{\mathcal{S}^r} \Gamma^{\mathcal{R}} = \mathbf{0} \\
        & & &c^{\mathcal{S}^r} - c^{\mathcal{R}} - G^{\mathcal{S}^r} \beta^{\mathcal{R}} = \mathbf{0} \\
        & & &\| \begin{bmatrix}
                \Gamma^{\mathcal{R}} & \beta^{\mathcal{R}}
            \end{bmatrix}
        \|_{\infty} \leq 1 \\
        & & &G^{\mathcal{A}^r} - G^{\mathcal{A}} \Gamma^{\mathcal{A}^r} = \mathbf{0} \\
        & & &c^{\mathcal{A}} - c^{\mathcal{A}^r} - G^{\mathcal{A}} \beta^{\mathcal{A}^r} = \mathbf{0} \\
        & & &\| \begin{bmatrix}
                \Gamma^{\mathcal{A}^r} & \beta^{\mathcal{A}^r}
            \end{bmatrix}
        \|_{\infty} \leq 1,
\end{aligned}
\end{equation}
where $\mathbf{0}$ are zero matrices of appropriate dimensions.
By vectorizing $\Gamma^{\mathcal{R}}$ and $\Gamma^{\mathcal{A}^r}$ and resolving the infinity norm constraints \cite[Sec. 1.3]{bertsimas1997introduction}, we can obtain a formulation with purely linear constraints. 
Since computing the exact volume of a zonotope is computationally expensive, we approximate it by computing the geometric mean of the parameterization vector:
\begin{equation}\label{volume_cost_function}
    \widetilde{\texttt{Vol}}(\mathcal{A}^r(p)) = \left( \prod_i \Tilde{p}_i \right)^{\frac{1}{P}}.
\end{equation}
With the cost function from \eqref{volume_cost_function}, the problem in \eqref{compute_feasible_action_set_zonos} is an exponential cone program, which is convex and can be efficiently computed \cite[Sec. 2.5]{boyd2007tutorial}.

\subsubsection{Seeker Reach-Avoid environment}\label{app:seeker_optim}
Because of the dynamics of the Seeker Reach-Avoid environment, we can simplify the relevant action set computation to the following optimization problem:
\begin{equation}\label{eq:seeker-relevant-action-set-opt}
\begin{aligned}
    &\!\max_{p}        &\qquad& \widetilde{\texttt{Vol}}(\mathcal{A}^r(p)) \\
    &\text{subject to} &      & \mathcal{A}^r(p) \subseteq \mathcal{A} \\
    &                  &      & \mathcal{S}^r_{\Delta t} = s \oplus \mathcal{A}^r(p) \\
    &                  &      & \mathcal{S}^r_{\Delta t} \subseteq \left[ -10, 10 \right]^2 \\
    &                  &      & \mathcal{S}^r_{\Delta t} \cap \mathcal{O} = \emptyset, 
\end{aligned}
\end{equation}

where $\mathcal{S}_{\Delta t}$ is the reachable set of the agent in the next time step, the set $\mathcal{O}$ represents the obstacle, and the box $\left[ -10, 10 \right]^2$ is the outer boundary of the state space. The constraints represent three intuitive geometric constraints. First, containment of the relevant action set in the action set (usually, the interval $[-1, 1]^2$, second, the containment of the relevant state set of the next time step in the environment boundaries, and the last constraint enforces that there is no collision with the obstacle possible.
To enforce these constraints for zonotopes, we use the support function \cite[Lemma 1]{Althoff2016supportfunctions}
\begin{equation}\label{eq:zonotope_support_function}
    \rho_\mathcal{Z}(l) = l^T c + \sum_{i=1}^P \big| l^T G_{(\cdot , i)} \big|,
\end{equation}
where $l$ is the vector in the direction of interest. For the first constraint, we need to ensure that the support function for the basis vectors $e_1$, and $e_2$ (and there negative versions) are less than or equal to $1$, i.e.,
\begin{equation}\label{eq:support_function_basis}
    \begin{bmatrix} \rho_{\mathcal{A}^r}(e_1) \\ \rho_{\mathcal{A}^r}(e_2) \end{bmatrix} =
    \begin{bmatrix}
        c_1^{\mathcal{A}^r} + \sum_{i=1}^P \big\vert G_{(1, i)}^{\mathcal{A}^r} \big\vert \\
        c_2^{\mathcal{A}^r} + \sum_{i=1}^P \big\vert G_{(2, i)}^{\mathcal{A}^r} \big\vert
    \end{bmatrix} = 
    c^{\mathcal{A}^r} + \sum_{i=1}^P \big\vert G_{(\cdot, i)}^{\mathcal{A}^r} \big\vert \leq 
    \begin{bmatrix} 1 \\ 1 \end{bmatrix}.
\end{equation}
From \eqref{eq:zonotope_support_function} it is apparent that using the negative basis vectors $-e_1$ and $-e_2$, just flips the sign of $c^{\mathcal{A}^r}$ in \eqref{eq:support_function_basis}.
The constraints for $\mathcal{S}^r_{\Delta t} \subseteq \left[ -10, 10 \right]^2$ can be constructed similarly. For simplicity, we express the element-wise inequality in \eqref{eq:support_function_basis} using the $L_\infty$-norm, and write the optimization problem as
\begin{equation}
\begin{aligned}
    &\!\max_{p}        &\qquad& \widetilde{\texttt{Vol}}(\mathcal{A}^r(p)) \\
    &\text{subject to} &      & \| c^{\mathcal{A}^r} + \sum_i |G^{\mathcal{A}^r}_{(\cdot, i)}| \|_\infty \leq 1 \\
    &                  &      & \| - c^{\mathcal{A}^r} + \sum_i |G^{\mathcal{A}^r}_{(\cdot, i)}| \|_\infty \leq 1 \\
    &                  &      & \| c^{\mathcal{A}^r} + s + \sum_i |G^{\mathcal{A}^r}_{(\cdot, i)}| \|_\infty \leq 10 \\
    &                  &      & \| - c^{\mathcal{A}^r} + s + \sum_i |G^{\mathcal{A}^r}_{(\cdot, i)}| \|_\infty \leq 10 \\
    &                  &      & n^T (c^{\mathcal{A}^r} + s) + \sum_i | n^T G^{\mathcal{A}^r}_{(\cdot, i)} | \leq b.
\end{aligned}
\end{equation}
The last constraint represents an under-approximation of $\mathcal{S}^r_{\Delta t} \cap \mathcal{O} = \emptyset$ enforcing the containment of $\mathcal{A}^r$ in the halfspace $\{ x \vert n^T x \leq b \}$ through the support function $\rho_{\mathcal{A}^r}(n)$. 
The directional vector $n = \frac{o - s}{\| o - s \|}$ where $o$ is the center of the obstacle, and the offset $b = n^T (o - n) r$ where $r$ is the radius of the obstacle. This ensures that the intersection between the reachable set $\mathcal{S}^r_{\Delta t}$ and the obstacle $\mathcal{O}$ will be empty since the halfspace is constructed tangential to the obstacle at the intersection point of the line from the agent to the center of the obstacle.

\subsection{Computation of zonotope boundary points}\label{app:zono_boundary_points}

The boundary point $p \in \mathbb{R}^N$ on a zonotope $\langle c, G \rangle_{\mathcal{Z}} \subset \mathbb{R}^N$ in a direction $d \in \mathbb{R}^N$ starting from a point $x \in \mathbb{R}^N$ is obtained by solving the linear program
\begin{equation}
\begin{aligned}
        &\!\min_{\alpha \in \mathbb{R}, \gamma \in \mathbb{R}^N} &\qquad& \alpha \\
        & \text{subject to} & & x + \alpha d = c + G \gamma \\
        & & & \| \gamma \|_{\infty} \leq 1
\end{aligned}
\end{equation}
and computing $p = x + \alpha d$ \cite{kulmburg2021co}.

\subsection{Parameters and dynamics for environments}\label{app:envparameters}
We provide the action space bounds, the state space bounds, and the generator template matrix $\Tilde{G}$ for the Seeker and Quadrotor environments in Table \ref{tab:param_envs}. 

\begin{table}[h]
    \caption{Parameters for important sets of environments.}
     \label{tab:param_envs}
    \centering
    \begin{tabular}{l c c c }
        \toprule
        Parameter & Seeker & 2D Quadrotor & 3D Quadrotor \\
        \midrule
        Lower bound actions   &  [-1, -1]  &  [6.83, 6.83]  &   [-9.81, -0.5, -0.5, -0.5]   \\
        Upper bound actions   &  [1, 1]  &  [8.59,  8.59]   &  [2.38, 0.5, 0.5, 0.5]   \\
        Lower bound states    &  [-10, -10]  &  \makecell[c]{[-1.7, 0.3, -0.8, ... \\ -1, -$\pi$/12, -$\pi$/2]}  &   \makecell[c]{[-3, -3, -3, -3, -3, -3, ... \\ -$\pi$/4, -$\pi$/4, -$\pi$, -3, -3, -3]}   \\
        Upper bound states        &  [10, 10]  &  \makecell[c]{[1.7, 2.0, 0.8, ... \\ 1.0, $\pi$/12, $\pi$/2]}  &  \makecell[c]{[3, 3, 3, 3, 3, 3, ... \\ $\pi$/4, $\pi$/4, $\pi$, 3, 3, 3]}   \\
        Template matrix $\Tilde{G}$   &  $\begin{bmatrix}
            1 & 1 &1 & 0 \\ 
            1& -1& 0& 1
        \end{bmatrix}$  & $\begin{bmatrix}
            1 & 1 &1\\ 
            1& -1& 0
        \end{bmatrix}$ 
   &   $I_4$   \\
        \bottomrule
    \end{tabular}
\end{table}

The system dynamics of the 2D Quadrotor is:
\begin{equation}\label{eq:2dquadrotordynamics}
        \dot{s} = \left( \begin{array}{c}
\dot{x}\\
\dot{z} \\
(a_1+a_2)  k \sin(\theta) \\
-g + (a_1+a_2) k  \cos(\theta) \\
\dot{\theta} \\
-d_0 \theta - d_1 \dot{\theta} + n_0 (-a_1+a_2) \end{array} \right ) +  \left( \begin{array}{c}
0\\
0 \\
w_1 \\
w_2 \\
0 \\
0 \end{array} \right ),
\end{equation}
where the state is $s = \left[x, z, \dot{x}, \dot{z}, \theta, \dot{\theta}\right]^{T}$, the action is $a = \left[a_1, a_2\right]^{T}$, and the disturbance is $w = \left[w_1, w_2\right]^{T}$. The dynamics are adapted from \cite[Eq. 43]{mitchell2019invariantviabilitydiscriminatingkernel} so that the actions are two independent thrusts. Additionally, the parameters used in our experiments are $g=$\SI{9.81}{\meter \per \second\squared}, $k=$\SI{1}{1 \per kg}, $d_0=$\num{70}, $d_1=$\num{17}, $n_0=$\num{55}, and $\mathcal{W} = [-0.08, -0.08] \times [0.08,0.08]$.

The 3D Quadrotor is modeled in a twelve-dimensional state space with state $s = [x , y, z, \dot{x}, \dot{y}, \dot{z}, \theta, \phi, \psi, \dot{\theta}, \dot{\phi}, \dot{\psi}]^{T}$ and four-dimensional action space with action $a = [a_1, a_2, a_3, a_4]^{T}$. The system dynamics are $\dot{s} = [\dot{x}, \dot{y}, \dot{z}, -9.81 \phi, 9.81 \theta, a_1, \dot{\theta}, \dot{\phi}, \dot{\psi}, a_2, a_3, a_4]^{T} $ \cite{Kaynama2015}.

For the Walker2D environment, we use the standard parameters of the gymnasium implementation\footnote{\url{https://gymnasium.farama.org/environments/mujoco/walker2d/}}. The relevant action set is a zonotope under-approximation of the relevant action set stated in \eqref{eq:relevant_action_set_walker} with 36 generators and $\alpha_P=1$.

\subsection{Computational time for training}\label{app:runtime}
Note that the increased computation time of the masking approaches compared to the baseline is mainly due to the computation of the relevant action sets. Since the relevant action set is a zonotope, the \genapproach{} does not require expensive additional computations. The increased runtime for the \rayapproach{} mainly originates from the computation of the boundary points (see Appendix \ref{app:zono_boundary_points}). For the \distapproach{}, the increased runtime is mostly caused by the Markov chain Monte Carlo sampling of the action.

\begin{table}[h]
    \caption{Mean and standard deviation of runtime in hours for training runs on the utilized machine.}
    \label{tab:runtime}
    \centering
    \begin{tabular}{l c c c c}
        \toprule
         & Seeker & 2D Quadrotor & 3D Quadrotor & Walker2D \\
        \midrule
        Baseline           & $0.045 \pm 0.001$   & $1.049 \pm 0.197$  & $0.493 \pm 0.077$  & $0.350 \pm 0.007$  \\

        Ray mask           & $0.865 \pm 0.013$   & $3.140 \pm 0.097$  & $2.031 \pm 0.258$  & $0.962 \pm 0.043$  \\

        Generator mask     & $0.699 \pm 0.016$   & $2.533 \pm 0.021$  & $1.528 \pm 0.010$  & $0.557 \pm 0.021$  \\

        Distributional mask& $1.620 \pm 0.017$    & $4.083 \pm 0.080$  & $3.130 \pm 0.045$  & $\approx 60$       \\

        Replacement        & $0.823 \pm 0.015$   & $2.782 \pm 0.048$  & $1.765 \pm 0.033$  & $0.880 \pm 0.039$                    \\

        \bottomrule
    \end{tabular}
\end{table}

\subsection{Quantitative results for Walker2D environment compared to standard PPO}\label{app:walkerextension}

If we remove the power constraint that resets the environment whenever an action outside of $\mathcal{A}^r$ is selected, the standard \ac{PPO} is able to learn a policy that performs slightly better than the \rayapproach{} towards the end of training while converging slightly slower (see Figure~\ref{fig:rewards_seeker}). However, during deployment, only $0.05\%$ of the actions from this policy are from within $\mathcal{A}^r$. Yet, the reward that standard \ac{PPO} achieves during deployment is on average $2177.57$ with standard deviation $1238.42$. This is slightly higher than the deployment results for the \rayapproach{} (see Table~\ref{tab:deploy}).
\begin{figure}
    \centering
    \includegraphics[scale=1]{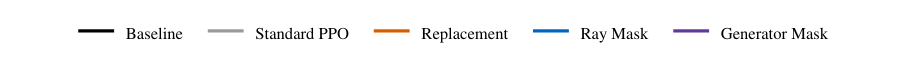}\\ \vspace{-0.3cm}
    \includegraphics[scale=1]{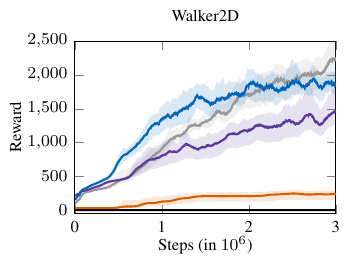}
    \caption{Average reward curves for Walker2D with transparent bootstrapped 95\% confidence interval including standard PPO as additional baseline.}
    \label{fig:rewards_seeker}
\end{figure}

\subsection{Qualitative results for the Seeker environment}\label{app:seekerqualitative}
Figure \ref{fig:Seeker-trajectories} presents ten example trajectories for a trained agent using each of the five approaches under consideration. Notably, the continuous action masking agents solve the task by safely reaching the goal while the \ac{PPO} baseline still collides with the obstacle. The replacement approach is ineffective, with the agent frequently failing to reach the goal. In fact, for the ten displayed rollouts, no replacement agent reaches the goal. This discrepancy with respect to the training performance where the replacement agent reaches the goal frequently is due to using the policy in a deterministic setting for deployment. 
The simple dynamics of the Seeker environment make it possible to directly visualize the relevant action set $\mathcal{A}^r$ at each time step along the trajectory, which is depicted in the lower half of Figure \ref{fig:Seeker-trajectories}. For the generator mask, the least amount of $\mathcal{A}^r$ is plotted, which indicates that the generator mask agent reaches the goal most efficiently, i.e., the lowest amount of steps are required.

\begin{figure}[!h]
    \centering
    \includegraphics[width=1\textwidth]{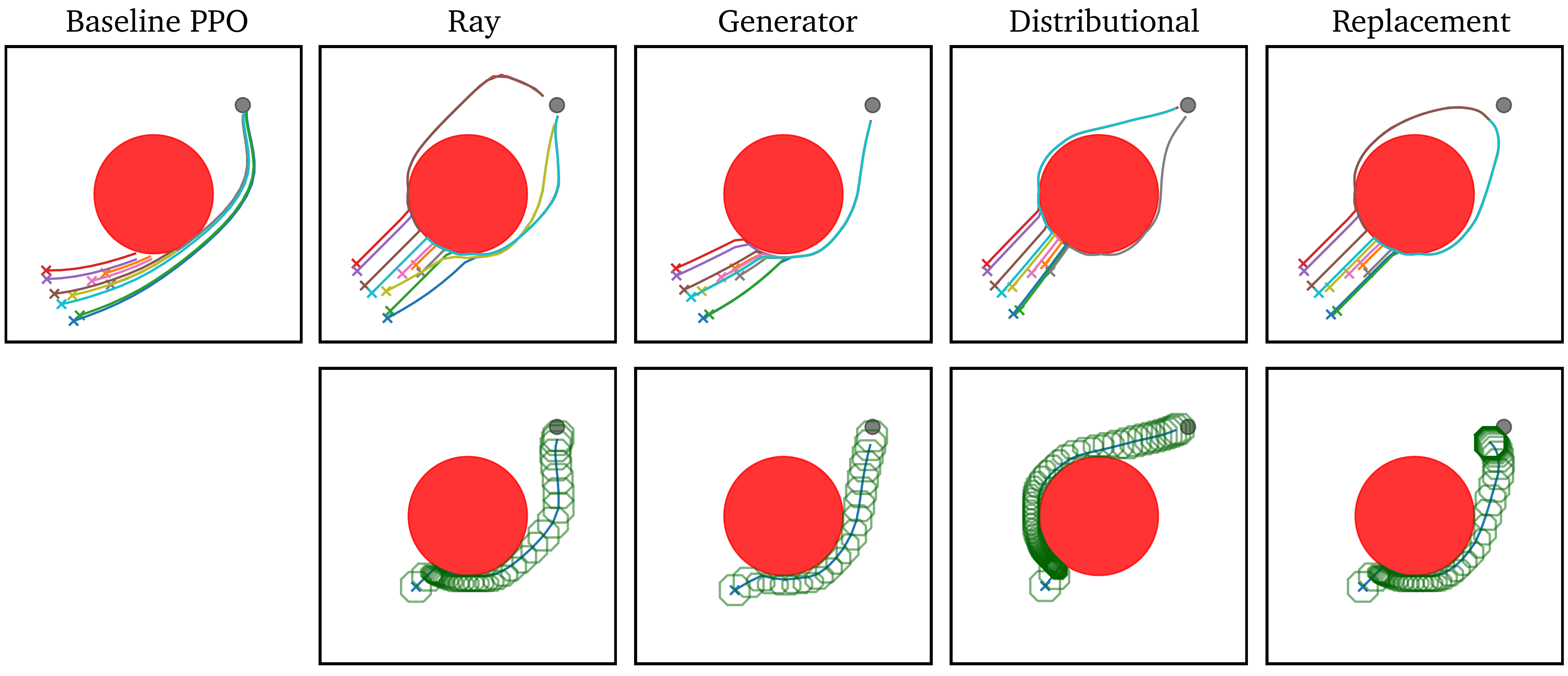}
    \caption{Qualitative deployment results for ten initial states and one goal-obstacle configuration for the Seeker environment. The top half shows ten trajectories with randomly sampled starting states. The lower half depicts the relevant action set (green polygon) for each time step along one trajectory.}
    \label{fig:Seeker-trajectories}
\end{figure}

\subsection{Hyperparameters for learning algorithms}\label{app:hyperparameters}
We specify the hyperparameters for the three masking approaches, replacement and baseline in the Seeker (Table \ref{tab:hyperparameters_seeker}), the 2D Quadrotor (Table \ref{tab:hyperparameters_2D_quad}), and the 3D Quadrotor environment (Table \ref{tab:hyperparameters_3D_quad}). These were obtained through hyperparameter optimization with 50 trials using the tree-structured Parzen estimator of Optuna with the default parameters \cite{akiba2019optuna}.

\begin{table}[h]
    \caption{\Ac{PPO} hyperparameters for the Seeker environment.}
     \label{tab:hyperparameters_seeker}
    \centering
    \begin{tabular}{l c c c c c}
        \toprule
        Parameter & Baseline & Ray & Generator & Distributional & Replacement \\
        \midrule
        Learning rate               & $5.43\mathrm{E}-5$    & $8.25\mathrm{E}-4$    & $3.45\mathrm{E}-4$    & $3.85\mathrm{E}-5$  & $1.92\mathrm{E}-6$  \\
        Discount factor $\gamma$    & $0.98$                & $0.98$                & $0.98$                & $0.98$              & $0.98$              \\
        Steps per update            & $32$                  & $256$                 & $2084$                & $32$                & $128$               \\
        Optimization epochs         & $4$                   & $8$                   & $16$                  & $4$                 & $4$                 \\
        Minibatch size              & $8$                   & $128$                 & $256$                 & $8$                 & $128$               \\
        Max gradient norm           & $0.9$                 & $0.9$                 & $0.9$                 & $0.9$               & $0.9$               \\
        Entropy coefficient         & $4.71\mathrm{E}-5$    & $1.66\mathrm{E}-7$    & $6.61\mathrm{E}-7$    & $3.33\mathrm{E}-6$  & $1.83\mathrm{E}-7$  \\
        Initial log std dev         & $-1.183$              & $-0.010$              & $-0.255$              & $-1.213$            & $-1.064$            \\
        Value function coeff.       & $0.5$                 & $0.5$                 & $0.5$                 & $0.5$               & $0.5$               \\
        Clipping range              & $0.1$                 & $0.1$                 & $0.1$                 & $0.1$               & $0.1$               \\
        GAE $\lambda$               & $0.9$                 & $0.9$                 & $0.9$                 & $0.9$               & $0.9$               \\
        Activation function         & ReLU                  & ReLU                  & ReLU                  & ReLU                & ReLU                \\
        Hidden layers               & $2$                   & $2$                   & $2$                   & $2$                 & $2$                 \\
        Neurons per layer           & $32$                  & $32$                  & $32$                  & $32$                & $32$                \\
        \bottomrule
    \end{tabular}
\end{table}

\begin{table}[h]
\caption{\Ac{PPO} hyperparameters for the 2D Quadrotor environment.} \label{tab:hyperparameters_2D_quad}
    \centering
    \begin{tabular}{l c c c c c}
        \toprule
        Parameter & Baseline & Ray & Generator & Distributional & Replacement \\
        \midrule
        Learning rate               & $1.24\mathrm{E}-4$    & $7.92\mathrm{E}-4$    & $4.34\mathrm{E}-3$    & $3.94\mathrm{E}-4$  & $1.13\mathrm{E}-4$  \\
        Discount factor $\gamma$    & $0.99$                & $0.99$                & $0.99$                & $0.99$              & $0.99$              \\
        Steps per update            & $256$                 & $1024$                & $1024$                & $1024$              & $512$               \\
        Optimization epochs         & $32$                  & $8$                   & $8$                   & $8$                 & $8$                 \\
        Minibatch size              & $64$                  & $128$                 & $128$                 & $128$               & $128$               \\
        Max gradient norm           & $0.9$                 & $0.9$                 & $0.9$                 & $0.9$               & $0.9$               \\
        Entropy coefficient         & $8.9\mathrm{E}-2$     & $5.65\mathrm{E}-2$    & $5.08\mathrm{E}-2$    & $5.99\mathrm{E}-3$  & $5.42\mathrm{E}-3$  \\
        Initial log std dev         & $-0.437$              & $-0.784$              & $-1.251$              & $-1.217$            & $-1.019$            \\
        Value function coeff.       & $0.5$                 & $0.5$                 & $0.5$                 & $0.5$               & $0.5$               \\
        Clipping range              & $0.1$                 & $0.1$                 & $0.1$                 & $0.1$               & $0.1$               \\
        GAE $\lambda$               & $0.95$                & $0.95$                & $0.95$                & $0.95$              & $0.95$              \\
        Activation function         & ReLU                  & ReLU                  & ReLU                  & ReLU                & ReLU                \\
        Hidden layers               & $2$                   & $2$                   & $2$                   & $2$                 & $2$                 \\
        Neurons per layer           & $256$                 & $256$                 & $256$                 & $256$               & $256$               \\
        \bottomrule
    \end{tabular}
\end{table}

\begin{table}[tb]
\caption{\Ac{PPO} hyperparameters for the 3D Quadrotor environment.}
    \centering
    \begin{tabular}{l c c c c c}
        \toprule
        Parameter & Baseline & Ray & Generator & Distributional & Replacement \\
        \midrule
        Learning rate               & $2.38\mathrm{E}-4$    & $1.08\mathrm{E}-3$    & $9.24\mathrm{E}-5$    & $7.88\mathrm{E}-4$   & $6.25\mathrm{E}-4$   \\
        Discount factor $\gamma$    & $0.98$                & $0.98$                & $0.98$                & $0.98$               & $0.98$               \\
        Steps per update            & $32$                  & $128$                 & $128$                 & $64$                 & $128$                \\
        Optimization epochs         & $8$                   & $4$                   & $16$                  & $4$                  & $4$                  \\
        Minibatch size              & $16$                  & $32$                  & $16$                  & $64$                 & $64$                 \\
        Max gradient norm           & $0.9$                 & $0.9$                 & $0.9$                 & $0.9$                & $0.9$                \\
        Entropy coefficient         & $5.85\mathrm{E}-5$    & $1.14\mathrm{E}-7$    & $3.41\mathrm{E}-7$    & $2.75\mathrm{E}-6$   & $1.88\mathrm{E}-6$   \\
        Initial log std dev         & $-3.609$              & $-1.793$              & $-1.363$              & $-1.880$             & $-1.582$             \\
        Value function coeff.       & $0.5$                 & $0.5$                 & $0.5$                 & $0.5$                & $0.5$                \\
        Clipping range              & $0.1$                 & $0.1$                 & $0.1$                 & $0.1$                & $0.1$                \\
        GAE $\lambda$               & $0.9$                 & $0.9$                 & $0.9$                 & $0.9$                & $0.9$                \\
        Activation function         & ReLU                  & ReLU                  & ReLU                  & ReLU                 & ReLU                 \\
        Hidden layers               & $2$                   & $2$                   & $2$                   & $2$                  & $2$                  \\
        Neurons per layer           & $32$                  & $32$                  & $32$                  & $32$                 & $32$                 \\
        \bottomrule
    \end{tabular}
    
    \label{tab:hyperparameters_3D_quad}
\end{table}

\begin{table}[tb]
\caption{\Ac{PPO} hyperparameters for the Walker2D environment.}
    \centering
    \begin{tabular}{l c c c c}
        \toprule
        Parameter & Baseline & Replacement & Ray mask & Generator mask \\
        \midrule
        Learning rate               & $6.992\mathrm{E}-5$   & $4.700\mathrm{E}-3$   & $2.607\mathrm{E}-4$   & $1.719\mathrm{E}-4$   \\
        Discount factor $\gamma$    & $0.99$                & $0.99$                & $0.99$                & $0.99$                \\
        Steps per update            & $2048$                & $2048$                & $2048$                & $2048$                \\
        Minibatch size              & $128$                 & $64$                  & $16$                  & $32$                  \\
        Max gradient norm           & $0.603$               & $0.336$               & $0.156$               & $0.152$               \\
        Entropy coefficient         & $6.559\mathrm{E}-7$   & $5.960\mathrm{E}-6$   & $4.992\mathrm{E}-6$   & $7.488\mathrm{E}-5$   \\
        Clipping range              & $0.165$               & $0.131$               & $0.102$               & $0.192$               \\
        GAE $\lambda$               & $0.970$               & $0.944$               & $0.919$               & $0.957$               \\
        Value function coefficient  & $0.259$               & $0.330$               & $0.181$               & $0.500$               \\
        Activation function         & ReLU                  & ReLU                  & ReLU                  & ReLU                  \\
        Hidden layers               & $2$                   & $2$                   & $2$                   & $2$                   \\
        Neurons per layer           & $64$                  & $64$                  & $64$                  & $64$                  \\
        \bottomrule
    \end{tabular}
    \label{tab:hyperparameters_walker}
\end{table}

\end{document}